\DeclareMathOperator*{\argmax}{argmax}
\definecolor{tiffany}{rgb}{.039, .729, .710}
\begin{document}

\title{Fast Multi-Class Probabilistic Classifier by Sparse Non-parametric Density Estimation}

\author{
  Wan-Ping Nicole Chen \and Yuan-chin Ivan Chang
}

\institute{Wan-Ping Nicole Chen \at
              Institute of Statistical Science, Academia Sinica,Taipei, Taiwan \\
              \email{wpchen@stat.sinica.edu.tw}             \\
Yuan-chin Ivan Chang \at
		Institute of Statistical Science, Academia Sinica, Taipei, Taiwan\\
		\email{ycchang@sinica.edu.tw}
}

\maketitle

\begin{abstract}
The model interpretation is essential in many application scenarios and to build a classification model with a ease of model interpretation may provide useful information for further studies and improvement.  
It is common to encounter with a lengthy set of variables in modern data analysis, especially when data are collected in some automatic ways.
This kinds of datasets may not collected with a specific analysis target and usually contains redundant features, which have no contribution to a the current analysis task of interest. 
Variable selection is a common way to increase the ability of model interpretation and is popularly used with some parametric classification models.  There is a lack of studies about variable selection in nonparametric classification models such as the density estimation-based methods and this is especially the case for multiple-class classification situations.
In this study we study multiple-class classification problems using the thought of sparse non-parametric density estimation and propose a method for identifying high impacts variables for each class.  We present the asymptotic properties and the computation procedure for the proposed method together with some suggested sample size.  
We also repost the numerical results using both synthesized and some real data sets.

\keywords{multi-class classification \and nonparametric classification \and kernel density estimation \and sparsity \and bandwidth selection}
\end{abstract}

\section{Introduction}
\label{intro}
Classification is a common task in all kinds of data analysis scenarios such as medical studies, commercial activities, industrious manufacture research and so on.  In many applications, the accuracy is just a basic requirement to asses a classification rule and the model-interpretation is essential for the follow-up procedures.  For example, the treatment strategy after a medical diagnostic decision may depend on the features that actually affect such a decision.  Besides, a classification rule is usually build on a combination of many features, the subjects being assigned to each sub-group may still have different feature values.  To enhance the customized or adaptive services, which are hot and important research topics, will rely on such information. To this end, to know why or how those subjects are assigned to each subgroup or what variables play important roles in such a decision-making is crucial. This task is more difficult in a multiple-class classification situation and more important in modern classification applications, in which binary classifiers cannot serve well.
We study multiple-class classification problems, where each training point belongs to only one of $c$ ($\geq 2$) different classes. Our goal is to construct a classification function using these training data such that we can correctly assign newly observed subjects to one of the class, while to learn what the important features specifying each class are.

When the densities of each class were known and assuming a person is in a  class, we are able to compute the conditional probability for this person to belong to such a class. Then through comparing these conditional probabilities of each class, we can assign the subject to the class with maximum probability. This classification rule is known as the Bayes rule, and it is known to have the minimum error rate.  
In practice, the information of these densities are usually not available.  Hence, it is natural think of how to construct classification rule based on (non)-parametric density estimation methods.  There are many discussions of this longstanding problem in the literature such as 
\cite{ryzin1966}, 
\cite{DBLP:journals/tit/DevroyeW76}, 
\cite{DBLP:journals/prl/GreblickiP83}, 
\cite{Har-Peled:2002} and
\cite{John95estimatingcontinuous}. 
In particular, the result of 
\cite{ryzin1966} suggests that if we can accurately estimate the density of each class, then the classification error rate approaches to that of the optimal Bayes rule.  Following his results, there are many multi-class approaches based on different density estimation methods; for example, see  \cite{Ancukiewicz1998}, \cite{hall2005}, \cite{lugosi1996}, and \cite{Kobos2011} and \cite{796369}.

Due to the common obstacles in general density estimation methods, besides the classification performance, the computational cost and the ratio of sample-size to number of the variable dimensions are two major issues discussed in the literature. 
The cons and pros of this type of nonparametric methods are intensively discussed in 
\cite{Anil2006} and
the a least-squares probabilistic classifier (LSPC) in \cite{Sugiyama10a} is a typical example.
The needs of the modern applications make the model-interpretation an important feature of classification methods. 
Therefore, the ability of the interpretation of nonparametric classification methods recently catch the most of the attention; see \cite{Yang2010} for example.  

In this study, we propose a new algorithm which effectively combine some conventional methods to achieve better performance than its predecessors at only little cost of the computational time, while retains certain model-interpretation ability. We illustrate the proposed method with simulation studies and real examples, and discuss its statistical properties as well.  

\section{Methodology}
\label{sec:2}
We consider a multiple-class classification problem in this study.
Suppose that we observe a $d$ dimensional feature vector for each subject,  denoted with $\mathcal{X} \subset \mathbb{R}^d$
 and  let $\mathcal{Y} = \{1, \dots , c\}$ be the set of class labels, where $c$ is the number of classes.
Assume that $\mathcal{X} \times \mathcal{Y}$ follows a joint probability density $p(\mathbf{x}, y)$. 
Then for a given $n$ paired samples of input $\mathbf{x}$ and output $y$:
$$\{(\mathbf{x}_i, y_i) \in \mathcal{X} \times \mathcal{Y}\}_{i = 1}^n.$$
Let $p(\mathbf{x})$ denote the marginal density of $\mathbf{x}$, then it follows that the conditional probability $p(y|\mathbf{x})$ is  
$$p(y|\mathbf{x}) = \frac{p(\mathbf{x}, y)}{p(\mathbf{x})}.$$ 
If both $p(\mathbf{x})$ and $p(\mathbf{x}, y)$ are known, then we can classify a test sample $\mathbf{x}$ to the class 
$\hat{y}$ with confidence $p(\hat{y}|\mathbf{x})$:
$$\hat{y} := \argmax_{y \in \mathcal{Y}} p(y|\mathbf{x}).$$
This suggests that we can build a classification rule based on some density estimation methods to estimate  $p(\mathbf{x})$ and $p(\mathbf{x}, y)$.
However, this is a challenging problem. When the number of classes $c$ is large and/or the data domain $\mathcal{X}$ is in high-dimensional setting,
it is time-consuming computation and not easy to have a classification rule with satisfactory accuracy.

\subsection{Learn simultaneously, compute separately}
\label{sec:3}


For  computation efficiency and numerically stability, we adopt the thought in the probabilistic classification model proposed by Masashi \cite{Sugiyama10a},  called Least-Squares Probabilistic Classifier (LSPC), in which we will calculate the class-posterior probabilities of each class simultaneously by formulating each of them as linear combinations of joint basis functions of $\mathbf{x}$ and $y$: $\phi(\mathbf{x} , y)$. The model of the probability $p(y|\mathbf{x} )$ is then written as:
\begin{equation}
	p(y|\mathbf{x}; \bm{\alpha}) := \sum_{i = 1}^b \alpha_i\phi(\mathbf{x} , y) = \bm{\alpha}^T\bm{\phi}(\mathbf{x} , y),   
\end{equation}
where $\bm{\alpha} = (\alpha_1, \dots, \alpha_b)^T \in \mathbb{R}^b$ is a parameter vector to be learned from samples, and 
$\bm{\phi}(\mathbf{x}, y)  \in \mathbb{R}^b$ is a non-negative basis function vector
such that 
\begin{equation}
	\bm{\phi}(\mathbf{x}, y) \geqslant \mathbf{0}_b \text{ for all } (\mathbf{x}, y) \in \mathcal{X} \times \mathcal{Y}.  
\end{equation}
Choosing kernel model as the basis functions, we have that 
$$p(y|\mathbf{x}; \bm{\alpha}) = \sum_{y'=1}^c\sum_{i=1}^n \alpha_i^{y'}\mathcal{K}'(\mathbf{x}, \mathbf{x_i}, y, y'),$$
where $\mathcal{K}'$ is some kernel function and then there are $c \times n$ parameters in the parameter vector $\bm{\alpha}  = (\alpha_1^1, \dots, \alpha_n^1, \dots, \alpha_1^c, \dots, \alpha_n^c)^T \in \mathbb{R}^{c\times n}$. 
To simplify it further via separating the input $\mathbf{x}$ and output $y$ with a 
kernel $\mathcal{K}$ for $\mathbf{x}$ and the delta kernel for $y$, 
we have 
\begin{equation}
	p(y|\mathbf{x}; \bm{\alpha}) = \sum_{y'=1}^c\sum_{i=1}^n \alpha_i^{y'}\mathcal{K}(\mathbf{x}, \mathbf{x_i}) \delta_{y, y'},
\label{eq:kernel}
\end{equation}
where $\delta_{y, y'}$ is the \textit{Kronecker delta}:
\begin{equation}
\label{eq:kronecker}
\delta_{y, y'} =
  \begin{cases}
    1       & \quad \text{if } y = y',\\
    0  & \quad \text{otherwise. }
  \end{cases}
\end{equation}
 
For a specific class $y$, the above model \eqref{eq:kernel} becomes
\begin{equation}
	p(y|\mathbf{x}; \bm{\alpha}) = \sum_{i=1}^n \alpha_i^{y}\mathcal{K}(\mathbf{x}, \mathbf{x_i}), \: y=1,\dots, c.
\end{equation}
In this case, the posterior probability model is in the form similar to the kernel density estimator:
\begin{equation}
\hat f_H(\mathbf{x}) = \frac{1}{n \det(H)}\sum_{i=1}^n\mathcal{K}(H^{-1}(\mathbf{x} - \mathbf{x}_i)),
\end{equation}
where $\mathcal{K}$ is the kernel function, $H = diag(h_1, \dots, h_d)$ is a diagonal matrix with bandwidths $h_1, \dots, h_d$, and $\det(H)$ is the determine of the matrix $H$ and also is the product of the bandwidths: $\prod_{j=1}^d h_j$. Then, assume $\mathcal{K}$ is a product Gaussian kernel and use the inverse of the product of bandwidths as the linear combination coefficients, the posterior probability function can be defined as

\begin{align}
	p(y|\mathbf{x}; H_y) & := \frac{1}{n}\sum_{i=1}^n \frac{1}{\det(H_y)}\mathcal{K}(H_y^{-1}(\mathbf{x} - \mathbf{x}_i))\nonumber\\
	& := \frac{1}{n}\sum_{i=1}^n \prod_{j=1}^d\frac{1}{h_j^y}K\Big(\frac{x_j - x_{ij}}{h_j^y}\Big),
\end{align}
where $H_y = diag(h_1^y, \dots, h_d^y)$ is a diagonal matrix with $h_j^y$ is the bandwidth in the $j$th coordinate for the class $y$.

Because the posterior function $p(y|\mathbf{x}; H_y)$ has higher probabilities in the regions where samples in class $y$ are dense; conversely, $p(y|\mathbf{x};H_y)$ has lower values in the regions where samples in class $y$ are sparse. 
When using the Gaussian kernel function to approximate a non-negative function, more kernels are needed in the region where the output of the target function is large. 
In this case, the kernels located in the trainings samples in class $y$ are the good choice.   
Hence, we reduce the number of kernels further by locating the kernels only at samples belongs to the target class and rewrite the posterior probability function as
\begin{align}
	p(y|\mathbf{x}; H_y) & =: \frac{1}{n_y}\sum_{i=1}^{n_y} \frac{1}{\det(H_{y})}\mathcal{K}(H_y^{-1}(\mathbf{x} - \mathbf{x}_i^y))   \notag    \\
	& =  \frac{1}{n_y}\sum_{i=1}^{n_y} \prod_{j=1}^d\frac{1}{h_j^y}K\Big(\frac{x_j - x_{ij}^y}{h_j^y}\Big)\label{eq:postprob}
\end{align}
where $n_y$ is the number of training samples in the specific class $y$. 
That is, instead of using the whole training dataset, the posterior probability function for the class $y$ is estimated just by the training input samples in class $y$: $\{\mathbf{x}_i^y = (x_{i1}^y, \dots, x_{id}^y)\}_{i=1}^{n_y}$, which have the most information and contribution. 
In this case, the posterior density function is just a kernel density function. Therefore, in order to prevent the confusion between the posterior density function and the posterior probability function, instead of $p(y|\mathbf{x}; H_y)$, we use $p_y(\mathbf{x}; H_y)$ as the notation of the posterior density function.

Let $\hat H_y = \textit{diag}(\hat h_1^y, \dots, \hat h_d^y)$ the estimated kernel bandwidths and substitute for $H_y$ in \eqref{eq:postprob},  then for $y = 1, \dots, c$ we have the posterior probability 
\begin{align} 
\hat{p}(y|\mathbf{x};\hat H_1,\dots, \hat H_c) &=  \frac{\frac{1}{n_y}\sum_{i=1}^{n_y} \prod_{j=1}^d\frac{1}{\hat h_j^y}K\Big(\frac{x_j - x_{ij}^y}{\hat h_j^y}\Big)} {\sum_{y'=1}^c \frac{1}{n_{y'}}\sum_{i=1}^{n_{y'}} \prod_{j=1}^d\frac{1}{\hat h_j^{y'}}K\Big(\frac{x_j - x_{ij}^{y'}}{\hat h_j^{y'}}\Big)} \notag \\
& = \frac{\hat p_y(\mathbf{x};\hat H_y)}{\sum_{y'=1}^c\hat p(y'|\mathbf{x};\hat H_{y'})}.
\label{simplified}
\end{align}
Using \eqref{simplified}, the original learning problem can be decomposed into independent class-wise learning problems; that is, we respectively estimate posterior density functions for each class using disjoint training samples.  Hence,  this method can  notably reduce the computational cost.

The LSPC algorithm will determine its kernel bandwidth parameters through minimizing the squared error of the posterior-probabilities with the quadratic regularizer.
Because one can analytically compute these bandwidth parameters through solving a  linear equation system such that this procedure is highly efficient,
The computational complexity of the simplified model,  as in \eqref{simplified},  drops from  the original $\mathcal{O}(c^3n^3)$ to $\mathcal{O}(c^{-1}n^3)$. 
However, in \cite{fan2008} and \cite{shao2011}, authors pointed out that a classification rule could perform as bad as random guessing without considering the sparsity condition and complex structure in high-dimensional data sets.
%
%
Thus, we target at constructing a density-estimation based classification rule with features of local bandwidth selection and variable selection, simultaneously.

\subsection{Sparse, greedy nonparametric kernel density estimation}
\label{sec:4}

Collecting large sized datasets is feasible,  while to analysis them becomes a crucial challenge.
The classification will not perform well or even break down, when regardless of sparsity and overfitting issues in analyzing these data sets via simply learning with all variables.
%
We can find a lot of discussions in the literature about the impact of the dimensionality on classification and authors of \cite{fan2008} pointed out that the difficulty of high-dimensional classification is intrinsically caused by the existence of many noise features that do not contribute to the reduction of classification error. 
In addition, the large amount of variables in such a data set does not usually offer additional benefits for decision making and may cause complexity and confusion in model-interpretation instead. 
Although the accuracy is a primary index for assessing classification performance,  practitioners can always benefit from further understanding the ``mechanism'' of a classification rule, which provide information beyond classification accuracy.  Thus, an effective method to reduce dimensionality and remove irrelevant data will be a key to increase learning accuracy.



Nonparametric density estimation estimates the density directly from the data without assuming a particular form for the underlying distribution, which offer a advantage to 
 a  greater flexibility in modeling a given dataset with less model-specification bias than that in the common parametric approaches.


The kernel density estimation is a popular nonparametric method  (
\cite{rosenblatt1956} and 
\cite{parzen1962}). 
A density estimator $\hat f_h(x)$ using a kernel function with a bandwidth $h$, $K$=$K_h(u) = K(u/h)/h$ is defined as
\begin{equation}
\hat f_h(x) = n^{-1}\sum_{i=1}^nK_h(x_i-x).
\end{equation} 
Assume that $K$ is symmetric, i.e. $K(u) = K(-u) $ and
\begin{align}
	\int K(u)du = 1.
\end{align}
It implies that the estimate at given $x$ is a weighted average, according to the kernel function $K_h$, of the probability mass of observed $x_i$s around it. 

The bandwidth parameter $h$ is also called smoothing parameter, which determines the ``width'' of a kernel function;
a large $h$ may over-smooth the density estimator and mask the structure of the data, while a small $h$ may make it spiky and hard to be described.


The cross-validation is a popular approach for bandwidth selection, in which one can estimate $h$ by minimizing the integrated squared error 
\cite{CIS-47810},  
\cite{BOWMAN1984}. 
However, there is a lack of stability in such an approach 
\cite{oro28198}.
Thus, many authors study proposed some modified methods to stabilize the bandwidth selection of cross-validation methods; see
\cite{chiu1991a} 
\cite{chiu1991b}  
\cite{chiu1992} 
\cite{Hall1992}.
Another useful approach is plug-in methods that try to minimize the mean integrated squared error to find the bandwidths 
was discussed in \cite{botev2010} and \cite{Silverman86}. 


The data-driven properties of kernel methods provide a flexible data modeling approach, however these methods usually suffer from the curse of dimensionality,  which is often in real-world tasks. The computational cost is one of those issues because we need to decide bandwidths of each dimension. 
In fact, Scott and Sain (2005) claim that the direct estimation of the full density by kernel methods is feasible in as many as six dimensions \cite{SCOTT2005229}. 

There are more approaches about kernel density function estimations in high-dimensional spaces in the literature 
\cite{LEIVAMURILLO2012}, 
\cite{Gu2013}. 
In these articles,  authors find the bandwidths using different criterions or objective functions and still ignore the affection of the redundant variables which has no or little impact to the estimate. 
For classification problems, adaptive estimates of each density function based on the individual training samples of each group with LSPC is a promising approach to improve the classification performance. Furthermore, we can learn the relevant features for density estimation of different classes. 
Hence, we adopt the thought of the greedily bandwidth selection using the regularization of derivative expectation operator (Rodeo) proposes in 
\cite{pmlr-v2-liu07a} in our estimates such that under the sparsity assumption, we can determine the relevant features with faster convergence rate and lower computational cost.

\subsection{Bandwidths selection}
\label{sec:5}
The proposed classification rule is based on the estimated posterior density.  Suppose that these  high impacts variables to the posterior density estimates of each subgroups are the only relevant features its corresponding subgroup. If we can identify the corresponding variable sets of each class, this information will give us the about features that ``describe'' each class and this information will largely improve the interpretation ability the proposed nonparametric density estimation based classification rule.

For class $y$, $y=1, \ldots, c$,  let $R_y$ be the index set in which $\mathbf{x}_{R_y} = \{x_j: j \in R_y\}$ is a set of variables which have high impacts to the posterior density of subgroup $y$.
Without lost of generality, we can rearrange the order of variables in $\mathbf{x}_{R_y}$ for each class $y$ such that
$j$ in $R_y$, for $1 \leqslant j \leqslant r_y$, are the high impact variables and  $x_j$ in $\mathbf{x}_{R_y^c}$ correspond to $r_y+1 \leqslant j \leqslant d$ are the rest of $d-r_y$ variables. Please note we use the notation $r_y$ here. Because the high impact subsets $R_y$ for each class could be different, the size $r_y$ of the subsets may vary among these subgroups.
We will drop the subscript $y$ of $R_y$ and $r_y$ below when there is no ambiguity and for simplification. 
It follows that we can rewrite the posterior density function as
\begin{align}
	p_y(\mathbf{x} = (x_1, \dots, x_d), H_y) &= g_y(\mathbf{x}_R, H_y^R)u(\mathbf{x}_{R^c})\\
	&= g_y(\mathbf{x}_R, H_y^R)\nonumber,
\end{align} 
where $u$ is an uniform function, $g_y$ is an unknown function depending only on the set $\mathbf{x}_R$ and $H_y^R = diag(h_1^y, \dots, h_r^y)$ is a $r\times r$ submatrix of $H_y$. 
If these $r$ variables are sufficient to estimate the density model of a class, and the others variables have little impact to the model, then one can exploit this fact such that the nonparametric estimates can convergence faster.
We employ this thinking  and rewrite \eqref{eq:postprob}
as follows: 
\begin{align}
p_y(\mathbf{x}; H_y) &=  \frac{1}{n_y}\sum_{i=1}^{n_y} \prod_{j=1}^d\frac{1}{h_j^y}K\Big(\frac{x_j - x_{ij}^y}{h_j^y}\Big) \notag \\
&=\frac{1}{n_y}\sum_{i=1}^{n_y} \left[\prod_{j=1}^r\frac{1}{h_j^y}K\Big(\frac{x_j - x_{ij}^y}{h_j^y}\Big)\right] \left[\prod_{j=r+1}^d\frac{1}{h_j^y}K\Big(\frac{x_j - x_{ij}^y}{h_j^y}\Big)\right]  \label{uniform}\\
& \approx \frac{1}{n_y}\sum_{i=1}^{n_y} \left[\prod_{j=1}^r\frac{1}{h_j^y}K\Big(\frac{x_j - x_{ij}^y}{h_j^y}\Big)\right]. \label{approx}
\end{align}
Equation \eqref{uniform}  is a product of the kernels of the relevant variables $\mathbf{x}_R$ and kernels of the irrelevant variables $\mathbf{x}_{R^c}$.  By assumption, the second term of \eqref{uniform} follows a uniform distribution and hence we have  \eqref{approx}.  
It follows that we can use a large bandwidths value on $h_j, j = r+1, \dots, d$ to obtain a smooth kernel density function for estimating such a uniform function,
and the greedily bandwidth selection approach will be useful in this case.
Thus, variable $x_j$ associates with a small value of bandwidth $h_j^y$ is relatively important in estimating this density. 
On the contrary,  it suggests that the variable $x_j$ may be irrelevant in the density model, if the derivatives $|Z_j|$ is small while the corresponding value of  $h_j^y$ is relatively large.  This fact suggests us a way to  find out the relative importances of variables to a particular model.
Because each class has its own set of important variables, this kind of information can help us to ``describe a class,'' which is essential in many practical applications.

Let  $\mathbf{x}=(x_i, \dots, x_d)^T$ be a $d$-dimensional point from  class $y$, then the estimate of the posterior density of $\mathbf{x}$ based on a kernel method is
\begin{equation*}
\hat p_y(\mathbf{x}; H_y) =  \frac{1}{n_y}\sum_{i=1}^{n_y} \prod_{j=1}^d\frac{1}{h_j^y}K\Big(\frac{x_j - x_{ij}^y}{h_j^y}\Big).
\end{equation*}
Algorithm Rodeo starts with a bandwidth matrix $H_y = \textit{diag}(h_0, \dots, h_0)$ with a large $h_0$,  and  then for $ 1 \leqslant j \leqslant d$,  computes derivatives
\begin{align}
Z_j &= \frac{\partial \hat p_y(\mathbf{x}; H_y)}{\partial h_j^y}\nonumber\\
	&= \frac{1}{n_y}\sum_{i=1}^{n_y} \frac{\partial}{\partial h_j^y}\Large\left[\prod_{k=1}^d\frac{1}{h_k^y}K\Big(\frac{x_k - x_{ik}}{h_k^y}\Big)\Large\right]\nonumber\\ 
	&\equiv \frac{1}{n_y}\sum_{i=1}^{n_y} Z_{ji}.
\end{align}
If $K$ is the Gaussian kernel, the $Z_j$ becomes
\begin{align}
Z_j &= \frac{1}{n_y}\sum_{i=1}^{n_y} Z_{ji}
	= \frac{1}{n_y}\sum_{i=1}^{n_y}\frac{(x_j - x_{ij})^2-(h_j^y)^2}{(h_j^y)^3} \prod_{k=1}^d\frac{1}{h_k^y}K\Big(\frac{x_k - x_{ik}}{h_k^y}\Big).
\end{align}
%

If $|Z_j|$ is large and changing $h_j^y$ leads to a substantial change in its corresponding estimate, then we prefer a smaller bandwidth,  $\beta \times h_j^y$ with some $\beta \in (0, 1)$ to the original $h_j^y$.  We repeat this process for each $j$ and keep shrinking its corresponding bandwidth in discrete steps $1, \beta, \beta^2, \dots$, until the value of $|Z_j|$ is less than a threshold $\lambda_j$.
To implement the test statistic $Z_j$, we compare it to its variance
\begin{equation}
	\sigma_j^2 = \mbox{Var}(Z_j) = \mbox{Var}(\frac{1}{n_y}\sum_{i=1}^{n_y} Z_{ji}) = \frac{1}{n_y}\mbox{Var}(Z_{j1}).
\end{equation}
The variance $\sigma^2_j$ is estimated by $s_j^2 = v_j^2/n_y$ where $v_j^2$ is the sample variance of the $Z_{ji}$s. Then follow the suggestion in \cite{pmlr-v2-liu07a}, we set the threshold $\lambda_j = s_j\sqrt{2\log(n_y c)}$, where $c = O(\log n_y)$, due to the trade-off between variance and bias.


For other kernel functions, we can still use this method to determine  $\lambda_j$ if the sizes of each class is large enough.
Algorithm~\ref{alg1} states,  given  a datapoint $\mathbf{x}$, how we use the Rodeo algorithm for the posterior density estimate with bandwidths selection in each subgroup $y$.

\begin{algorithm}[H]
\SetAlgoLined
  \caption{Rodeo for Posterior Density Estimation in subgroup $y$}
  \label{alg1}
    \KwData{
	\begin{itemize}
		\item	$\mathbf{x}_i=(x_{i1}, \dots, x_{id})^T, 
      i = 1, \dots, n_y$: training data set of subgroup $y$
      \item $\mathbf{x}$: a point on which we want to find the posterior density estimator
	\end{itemize}

	}
    \KwIn{
	\begin{itemize}
	\item $0 < \beta < 1$: reduce rate for bandwidth
	\item $h_0 = c_0/\log\log n_y$: initial bandwidth for some constant $c_0$
	\item $c_n = O(\log n_y)$
	\end{itemize}
    }
    \KwOut {
	\begin{itemize}
	\item Bandwidths $\hat H_y = \text{diag}(\hat h_1^y, \dots, \hat h_d^y)$
	\item Posterior density estimator: $\hat p_y(\mathbf{x};\hat H_y)$
	\end{itemize}
    }
    \vspace{0.1in}
    
    \vspace{0.1in}
    \bf{Initialization}\\

    $h_j^y= h_0, j = 1, \dots, d$\\
    $\mathcal{A} = \{1, 2, \dots, d\}$\\
    \vspace{0.1in}
    \While{$\mathcal{A}$ is nonempty}{
    	\For{$j \in \mathcal{A}$}{
		Estimate the derivative $Z_j$ and sample variance $s_j^2$.\\
		Compute the threshold $\lambda_j = s_j\sqrt{2\log(n_y c_n)}$.\\
		If $|Z_j| > \lambda_j$, set $h_{j,i}^y \leftarrow \beta h_{j,i}^y$; otherwise remove $j$ from  $\mathcal{A}$.
	}
    }

\end{algorithm}

\subsection{Feature selection}

Because these selections of bandwidths are data-dependent, 
we apply a statistical hypothesis testing method to decide whether there is significant differences among the bandwidths of each variable.
After selecting local bandwidths for each training data point $i, i = 1 \dots, n_y$ in subgroup $y$, 
we calculate $z$-scores of the mean bandwidths so they have mean 0 and are scaled to have standard deviation 1:
\begin{equation}
z_j^y = \frac{\bar h_j^y-mean(\bar h_1^y, \dots, \bar h_d^y)}{std(\bar h_1^y, \dots, \bar h_d^y)}, \: j= 1,\dots, d,
\label{Eq:zscore}
\end{equation}
where $\bar h_j^y = mean(h_{j,1}^y, \dots, h_{j,n_y}^y)$ and $h_{j,i}^y$ denote the selected bandwidth for the $i$th training datapoint.

%

If $x_j$ is a relevant variable, we expect a smaller selected bandwidth compared to that of an irrelevant one. Therefore, compare $z_j^y$ with a given cutpoint $\tau_0$, if it is smaller than the cutpoint, we think the corresponding variable $x_j$ is relatively important  and then include this variable  in $\mathbf{x}_R$.  It means
\begin{equation}
x_j \in 
\begin{cases}
    \mathbf{x}_R,& \text{if } \ z_j^y \leqslant \tau_0\\
    \mathbf{x}_{R^c},              & \text{otherwise}.
\end{cases}
\label{Eq:xr}
\end{equation}
The process of feature selection in subgroup $y$ is described in Algorithm~\ref{alg2}.
%

\begin{algorithm}[H]
\SetAlgoLined
  \caption{Feature Selection in subgroup $y$}
  \label{alg2}
    \KwData{
	\begin{itemize}
		\item	$\mathbf{x}_i=(x_{i1}, \dots, x_{id})^T, 
      i = 1, \dots, n_y$: training data set of subgroup $y$
	\end{itemize}
	}
    \KwIn{
	\begin{itemize}
	\item $\tau_0$: cutpoint for feature selection
	\end{itemize}
    }
    \KwOut {
	\begin{itemize}
	\item High impact set $\mathbf{x}_R$
	\end{itemize}
    }
    \vspace{0.1in}
    
    \emph{\textbf{Learning}}
        \begin{enumerate}
    	\item \For{$i = 1, \dots, n_y$}{
	Find the local bandwidths $\hat H_{y,i} = \text{diag}(\hat h_{1,i}^y, \dots, \hat h_{d,i}^y)$
	for data point $\mathbf{x}_i$ by Algorithm~\ref{alg1}
	}
	\item Calculate mean bandwidths $\bar h_j^y = mean(\hat h_{j,1}^y, \dots, \hat h_{j,n_y}^y)$, $j = 1, \dots, d$.
	\item Calculate $z$-scores $z_j^y$ of mean bandwidths defined in Eq. \eqref{Eq:zscore}.
	\item Decide $\mathbf{x}_R$ by Eq. \eqref{Eq:xr}.
    \end{enumerate}
\end{algorithm}

\section{Numerical Results}
\label{sec:7}
In this section, we demonstrate the proposed algorithm on both synthetic and real dataset. The accuracy is used to evaluate the algorithm's classification performance quantitatively. $m$ evaluation points are chosen randomly and evenly from $c$ classes and the predicted classes based on the proposed algorithm, which is shown in Algorithm~\ref{alg3}, are compared to the true class. 
The classification performance and the computation cost are compared with the results of LSPC. 
The default parameters are $c_0=1$, $c_n = \log n_y$, $\beta = 0.9$, and $\tau_0 = -1$. 

\begin{algorithm}[H]
\SetAlgoLined
  \caption{Classification with Rodeo}
  \label{C+R}
  \label{alg3}
    \KwData{
	\begin{itemize}
	\item $\mathbf{S}_y=\{\mathbf{x}^y_1, \dots, \mathbf{x}^y_{n_y}\}$, $y= 1, \dots, c$: $c$ dataset from c subgroups
	\item $\mathbf{x}_i$ : $i = 1, \dots, m$: testing data with class unknown\\
	\end{itemize}
	}
    \KwIn{
	\begin{itemize}
	\item $0 < \beta < 1$: reduce rate for bandwidth
	\item $h_0^y = c_0/\log\log n_y$: initial bandwidth for subgroup $y$, $y = 1, \dots, c$
	\end{itemize}
    }
    \KwOut {
	\begin{itemize}
	\item Estimated label: $\hat y_i$, $i = 1, \dots, m$
	\item Accuracy
	\end{itemize}
    }
    \vspace{0.1in}

    \emph{\textbf{Learning}}\\    
    \For{$y=1, \dots, c$}{
    	Use training data set $\mathbf{S}_y$
    	\For{$i = 1, \dots, m$}{
		Find the posterior density estimator $\hat p_y(\mathbf{x}_i; \hat H_{y,i})$ with selected bandwidths $\hat H_{y,i} = (\hat h_{1,i}^y, \dots, \hat h_{d,i}^y)$ by Algorithm~\ref{alg1}	}
    }
    
    \vspace{0.1in}
    \emph{\textbf{Classification}}\\
    	$\hat y_i = \argmax_y \frac{\hat p_y(\mathbf{x}_i;\hat H_{y,i})}{\sum_{y'=1}^c \hat p_{y'}(\mathbf{x}_i;\hat H_{y',i})}$, $i = 1, \dots, m$ \\
    Accuracy $= \sum_{i=1}^m \delta _{\hat y_i,  y_i}/m$, $\delta _{\hat y_i,  y_i}$ is the \textit{Kronecker delta} defined in Eq.\eqref{eq:kronecker}
    
    
\end{algorithm}

\subsection{Ten-Group Example}
Fist, we apply the algorithm on a  dataset with 10 groups. In each group, 6 out of the total 30 dimensions are chosen as relevant variables respectively. The relevant dimensions in the $y$-th group are: $\{y, y+1, \dots, y+5\}$.  The relevant dimensions in each group $y$ are generated as 
\begin{equation*}
X_i^y \sim \mathcal{N}(0.5, (0.02*(i-y+1))^2),\, \text{for } i = y : y+5,\, y = 1 : 10,
\end{equation*}
while the other irrelevant dimensions are generated as
\begin{equation*}
X_i^y \sim \text{Uniform}(0, 1), \,\text{for } i \in \{1:30\}\setminus \{y:y+5\},\, y = 1 : 10.
\end{equation*}
1000 examples were generated for each group from these distributions. The number of training and testing examples in each group are 150 and 100 respectively. Table \ref{Table1} displays the results based on 1000 simulations. The mean of the classification accuracy of the proposed algorithm is $67.49\%$,  which is far better than $21.11\%$ by LSPC. The cost of time duration is 80.96 sec v.s. 2.45 sec. It shows, with the proposed algorithm, even though the number of classes is large, learning through decomposed class-wise problems conquers the results by LSPC, which suffers by large number of classes. However, sacrificing time for accuracy is must and necessary.

Figure \ref{Figure1} illustrates the box-plots of the mean predicted bandwidths of testing samples which are assigned to each group for 1000 iterations. The plots show the bandwidths of the relevant dimensions in each group shrink towards zero, while the bandwidths of the irrelevant dimensions remain large. The mean of Z-scores are displayed on Table \ref{Table2}. The cell background of the relevant variables in each group are set as gray. It is clear that the Z-scores in all gray cells are negative, while others are positive. It tells the predicted bandwidths of relevant variables are relatively smaller than those of the other irrelevant variables. 
The smaller the bandwidth is, the more important the variable is. 
Based on these results, comparing the values of the predicted bandwidths is a good guidance to select the relevant variables. 
It reveals the proposed algorithm can find out the relevant variables for each groups respectively. That is the reason why the accuracy is highly improved.

\begin{table}
\begin{center}
\begin{tabular}{|c|c|c|}\hline
 & \multicolumn{2}{c|}{Method} \\ \hline 
 & new & LSPC \\ \hline \hline 
Accuracy &   0.6749(  0.0153) &   0.2111(  0.0143) \\ \hline 
Time &  80.9627(  3.5454) &   2.4544(  0.1639) \\ \hline 
\end{tabular}

\caption{ Classification results and computation cost of Ex1}
\label{Table1}
\end{center}
\end{table}    

\begin{figure}
	\includegraphics[scale=0.3]{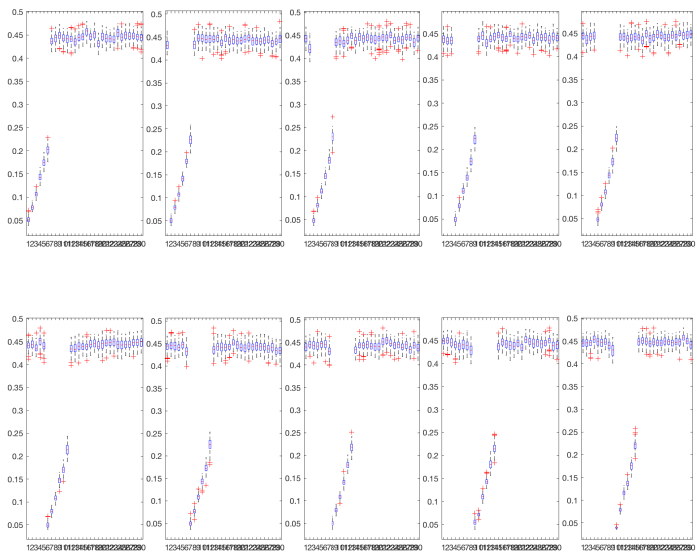}
	\caption{Ex1: The box-plots of mean predicted bandwidths for 10 groups.}
	\label{Figure1}
\end{figure}

\begin{table}    
\scalebox{0.65}{
\begin{tabular}{|c|cccccccccc|}\hline 
\diagbox{group~}{variable~~}& 1 & 2 & 3 & 4 & 5 & 6 & 7 & 8  & 9 & 10  \\ \hline \hline 
 1 &  -2.4945 \cellcolor[gray]{0.9} &  -2.2906 \cellcolor[gray]{0.9} &  -2.0713 \cellcolor[gray]{0.9} &  -1.7927 \cellcolor[gray]{0.9} &  -1.5563 \cellcolor[gray]{0.9} &  -1.3577 \cellcolor[gray]{0.9} &   0.4271 &   0.4578  &   0.5111 &   0.4665  \\ 
 2 &   0.4037 &  -2.5391 \cellcolor[gray]{0.9} &  -2.3148 \cellcolor[gray]{0.9} &  -2.1007 \cellcolor[gray]{0.9} &  -1.8287 \cellcolor[gray]{0.9} &  -1.5467 \cellcolor[gray]{0.9} &  -1.1630 \cellcolor[gray]{0.9}  &   0.4243  &   0.5217 &   0.5013  \\ 
 3 &   0.4876 &   0.3233 &  -2.5712 \cellcolor[gray]{0.9} &  -2.3153 \cellcolor[gray]{0.9} &  -2.0774 \cellcolor[gray]{0.9} &  -1.8116 \cellcolor[gray]{0.9} &  -1.5494 \cellcolor[gray]{0.9}  &  -1.1510 \cellcolor[gray]{0.9}  &   0.4432 &   0.4601  \\ 
 4 &   0.4834 &   0.4178 &   0.4339 &  -2.5299 \cellcolor[gray]{0.9} &  -2.3019 \cellcolor[gray]{0.9} &  -2.0567 \cellcolor[gray]{0.9} &  -1.8424 \cellcolor[gray]{0.9} &  -1.5743 \cellcolor[gray]{0.9}  &  -1.2160\cellcolor[gray]{0.9}  &   0.4722  \\ 
 5 &   0.4863 &   0.4270 &   0.4859 &   0.4842 &  -2.5408 \cellcolor[gray]{0.9} &  -2.2998 \cellcolor[gray]{0.9} &  -2.0880\cellcolor[gray]{0.9} &  -1.8080 \cellcolor[gray]{0.9}  &  -1.5877 \cellcolor[gray]{0.9} &  -1.1846 \cellcolor[gray]{0.9}  \\ 
 6 &   0.4666 &   0.4885 &   0.4371 &   0.5448 &   0.4636 &  -2.5231 \cellcolor[gray]{0.9} &  -2.2916\cellcolor[gray]{0.9} &  -2.0695 \cellcolor[gray]{0.9}  &  -1.7887 \cellcolor[gray]{0.9} &  -1.5981 \cellcolor[gray]{0.9}  \\ 
 7 &   0.4902 &   0.5029 &   0.5001 &   0.4827 &   0.5101 &   0.4120 &  -2.5386\cellcolor[gray]{0.9}  &  -2.3223 \cellcolor[gray]{0.9}  &  -2.0684 \cellcolor[gray]{0.9} &  -1.8073 \cellcolor[gray]{0.9}  \\ 
 8 &   0.4579 &   0.4956 &   0.4842 &   0.4877 &   0.5099 &   0.5026 &   0.3737 &  -2.5238 \cellcolor[gray]{0.9}  &  -2.3038 \cellcolor[gray]{0.9} &  -2.0830 \cellcolor[gray]{0.9}  \\ 
 9 &   0.5325 &   0.5382 &   0.4906 &   0.4596 &   0.4339 &   0.4780 &   0.4619 &   0.3750 &  -2.4896 \cellcolor[gray]{0.9} &  -2.3692 \cellcolor[gray]{0.9}  \\ 
 10 &   0.4683 &   0.4777 &   0.4652 &   0.5203 &   0.4863 &   0.4780 &   0.5114 &   0.4352 &   0.3545 &  -2.5888 \cellcolor[gray]{0.9}  \\ 
\hline \hline\diagbox{group~}{variable~~} & 11 & 12 & 13 & 14 & 15 & 16 & 17 & 18  & 19 & 20  \\ 
\hline \hline 1 &    0.4570 &   0.4687 &   0.4126 &   0.4872 &   0.4917 &   0.5714 &   0.4831 &   0.5172 &   0.3807 &   0.4928  \\ 
 2 &    0.5082 &   0.5023 &   0.4878 &   0.5190 &   0.4256 &   0.5284 &   0.4787 &   0.4745 &   0.4598 &   0.4753  \\ 
 3 &    0.4027 &   0.4776 &   0.5280 &   0.4630 &   0.5194 &   0.4789 &   0.5126 &   0.4936 &   0.4726 &   0.4892  \\ 
 4 &    0.5187 &   0.3978 &   0.4643 &   0.4965 &   0.5283 &   0.4851 &   0.4653 &   0.5482 &   0.4693 &   0.4699  \\ 
 5 &    0.4592 &   0.4662 &   0.4347 &   0.4654 &   0.4697 &   0.4717 &   0.4366 &   0.5116 &   0.4442 &   0.4540  \\ 
 6 &  -1.2664 \cellcolor[gray]{0.9} &   0.4146 &   0.4125 &   0.4481 &   0.4421 &   0.4596 &   0.4949 &   0.5118 &   0.4619 &   0.5037 \\ 
 7 &  -1.5889 \cellcolor[gray]{0.9} &  -1.1806 \cellcolor[gray]{0.9} &   0.4432 &   0.4774 &   0.4817 &   0.4741 &   0.4894 &   0.5457 &   0.5200 &   0.4843 \\ 
 8 &  -1.8292 \cellcolor[gray]{0.9} &  -1.5358 \cellcolor[gray]{0.9} &  -1.2452 \cellcolor[gray]{0.9} &   0.4231 &   0.4856 &   0.4778 &   0.5055 &   0.4850 &   0.4709 &   0.4654 \\ 
 9 &  -2.0683 \cellcolor[gray]{0.9} &  -1.8201 \cellcolor[gray]{0.9} &  -1.5176 \cellcolor[gray]{0.9} &  -1.2497 \cellcolor[gray]{0.9} &   0.4399 &   0.5152 &   0.4765 &   0.4386 &   0.4365 &   0.4931 \\ 
 10 &  -2.2917 \cellcolor[gray]{0.9} &  -2.0137 \cellcolor[gray]{0.9} &  -1.8417 \cellcolor[gray]{0.9} &  -1.5614 \cellcolor[gray]{0.9} &  -1.2107 \cellcolor[gray]{0.9} &   0.4760 &   0.5137 &   0.5020 &   0.4660 &   0.4996 \\ 
\hline \hline\diagbox{group~}{variable~~} & 21 & 22 & 23 & 24 & 25 & 26 & 27 & 28  & 29 & 30  \\ 
\hline \hline1 &   0.4685 &   0.4576 &   0.4486 &   0.5553 &   0.5031 &   0.5277 &   0.4911 &   0.5012 &   0.4959 &   0.4890  \\ 
2 &   0.4959 &   0.5185 &   0.4571 &   0.4652 &   0.4583 &   0.4951 &   0.4958 &   0.4395 &   0.4531 &   0.5036  \\ 
3 &   0.5100 &   0.5044 &   0.5390 &   0.4594 &   0.4909 &   0.4587 &   0.5239 &   0.5081 &   0.4300 &   0.4996  \\ 
4 &   0.5000 &   0.4791 &   0.5184 &   0.4417 &   0.5033 &   0.4763 &   0.5127 &   0.4436 &   0.5056 &   0.4899  \\ 
5 &   0.5227 &   0.4967 &   0.4558 &   0.4721 &   0.5185 &   0.4748 &   0.5496 &   0.4946 &   0.5072 &   0.5199  \\ 
6 &   0.5003 &   0.5224 &   0.5226 &   0.4763 &   0.4625 &   0.4722 &   0.4916 &   0.5148 &   0.5122 &   0.5125  \\ 
7 &   0.4649 &   0.5012 &   0.4798 &   0.4846 &   0.4903 &   0.4816 &   0.4554 &   0.4725 &   0.4378 &   0.4244  \\ 
8 &   0.5506 &   0.5633 &   0.5117 &   0.4825 &   0.5017 &   0.4491 &   0.5006 &   0.4098 &   0.4774 &   0.4493  \\ 
9 &   0.4234 &   0.5536 &   0.5358 &   0.4868 &   0.4954 &   0.4824 &   0.5025 &   0.5429 &   0.4540 &   0.4681  \\ 
10 &   0.4763 &   0.4593 &   0.4922 &   0.4712 &   0.4606 &   0.4965 &   0.4942 &   0.5559 &   0.5055 &   0.4420  \\ 
\hline \hline
\end{tabular}

}
\caption{ Ex1: Z-score of the mean predicted bandwidths.}
\label{Table2}
\end{table}

\subsection{Example with Special Located Means between Groups}
In this example, a data set with 5 groups is generated. The first two variables out of the total 10 dimensions are chosen as the relevant variables in all groups. In this case, the last 8 variables are generated by uniform distribution. As for the two relevant variables, the means of the last four groups are scattered around the first group on purpose, which are displayed on Figure \ref{fig:Ex2}, and the true values are listed on Table \ref{Table3}.
The distributions of the two relevant variable in each group are are defined as
\begin{equation}
N\bigg(\begin{bmatrix}  \mu^y_1 \\ \mu^y_2 \end{bmatrix}, \begin{bmatrix} 0.1^2 & 0 \\ 0 & 0.2^2 \end{bmatrix}\bigg), y = 1:5
\end{equation}
In order to the see the influence of the number of groups on classification performance, experiments with 2, 3, 4 and 5 groups and all combinations are considered. The number of training examples and the number of testing examples in each group are 200 and 150 respectively. 

The results of 1000 simulations are displayed on Table \ref{Table4}. Different combination means different data structure between groups, therefore, even having the same number of groups in classification, accuracy rate varies between different combinations. It is expectable that the classification performance declines when number of groups goes up, because the data structure becomes complicated with much overlaps between group spaces. 
The decision bounds between groups become more and more vague. The highest accuracy of the proposed algorithm and LSPC are $0.9954$ and $0.9950$ in 2 groups , $0.9298$ and $0.9295$ in 3 groups , $0.8841$ and $0.8559$ in 4 groups  and $0.7769$ and $0.7100$ in 5 groups. Both accuracy rates decline as number of groups increase. However, LSPC has higher decrease range, about $28\%$, while the proposed algorithm is $22\%$, when the number of groups increases from 2 groups to 5 groups.  

On the other hand, look at the lowest accuracy rates among these different combinations of the same number of groups, the proposed algorithm seems remain on the same level, 0.7818, 0.7836, 0.7652, and 0.7769, while LSPC keeps falling, 0.7639, 0.7246, 0.7114, and 0.7100, when number of groups increases. It shows learning with decomposed class-wise problems on multi-class classification is more suitable and effective.
The mean of Z-scores of the mean predicted bandwidths are displayed on Table \ref{Table5}. In order to reduce the space, we just show the results of the first combination in each setting.  The first two variables are the relevant variables and the mean of Z-scores are all negative. Therefore, the variable selection results based on the Z-scores of the mean predicted bandwidths are reliable.
\vspace{1in}

\begin{table}
        \begin{center}

        \begin{tabular}{|c|rr|}\hline
\diagbox{group}{variable} & 1 & 2 \\ \hline \hline 
1 &   0.0000 &   0.0000 \\ \hline 
2 &   0.1635 &   0.2044 \\ \hline 
3 &  -0.2452 &   0.1431 \\ \hline 
4 &  -0.2180 &  -0.3815 \\ \hline 
5 &   0.3815 &  -0.1907 \\ \hline 
\end{tabular}

        \caption{ Ex2: Mean and standard deviation of the related variables in each group.}
        \label{Table3}
    \end{center}
\end{table}

\begin{table}    
        \begin{center}
	\scalebox{0.63}{
        \begin{tabular}{|c|c|c||c|c|c|}\hline 
\# of groups & \multicolumn{2}{|c|}{2} & \# of groups & \multicolumn{2}{|c|}{3} \\ \hline 
\diagbox{combinations~}{Method~~} & new & LSPC & \diagbox{combinations~}{Method~~} & new & LSPC \\ \hline 
        4        5 &   0.9937(  0.0043) &   0.9944(  0.0044) &        3        4        5 &   0.9298(  0.0118) &  0.9295(  0.0114) \\ \hline 
        3        5 &   0.9954(  0.0035) &   0.9950(  0.0037) &        2        4        5 &   0.9297(  0.0116) &  0.9165(  0.0135) \\ \hline 
        3        4 &   0.9000(  0.0154) &   0.9049(  0.0158) &        2        3        5 &   0.9152(  0.0137) &  0.8647(  0.0154) \\ \hline 
        2        5 &   0.9046(  0.0172) &   0.8940(  0.0214) &        2        3        4 &   0.9079(  0.0142) &  0.9013(  0.0143) \\ \hline 
        2        4 &   0.9875(  0.0063) &   0.9757(  0.0091) &        1        4        5 &   0.9102(  0.0133) &  0.8752(  0.0170) \\ \hline 
        2        3 &   0.9701(  0.0084) &   0.9645(  0.0093) &        1        3        5 &   0.9005(  0.0131) &  0.8002(  0.0210) \\ \hline 
        1        5 &   0.9658(  0.0104) &   0.9555(  0.0115) &        1        3        4 &   0.8266(  0.0178) &  0.8166(  0.0175) \\ \hline 
        1        4 &   0.9016(  0.0184) &   0.8977(  0.0164) &        1        2        5 &   0.7937(  0.0207) &  0.7758(  0.0191) \\ \hline 
        1        3 &   0.8845(  0.0186) &   0.8506(  0.0219) &        1        2        4 &   0.7883(  0.0186) &  0.7246(  0.0201) \\ \hline 
        1        2 &   0.7818(  0.0219) &   0.7639(  0.0248) &        1        2        3 &   0.7836(  0.0163) &  0.7335(  0.0174) \\ \hline 
\multicolumn{6}{|c|}{ } \\ \hline 
\# of groups & \multicolumn{2}{|c|}{4} & \# of groups & \multicolumn{2}{|c|}{5} \\ \hline 
\diagbox{combinations~}{Method~~} & new & LSPC & \diagbox{combinations~}{Method~~} & new & LSPC \\ \hline 
        1        2        3        4 &   0.7652(  0.0161) &   0.7128(  0.0158) &        1        2        3        4        5 &   0.7769(  0.0155) &  0.7100(  0.0136) \\ \hline 
       1        2        3        5 &   0.7952(  0.0159) &   0.7114(  0.0175) & & &\\ \hline 
       1        2        4        5 &   0.7986(  0.0146) &   0.7453(  0.0167) & & &\\ \hline 
       1        3        4        5 &   0.8544(  0.0154) &   0.7764(  0.0178) & & &\\ \hline 
       2        3        4        5 &   0.8841(  0.0122) &   0.8559(  0.0141) & & &\\ \hline 
\end{tabular}

        }
        \caption{ Ex2: Accuracy results of classification on different combinations and different number of groups; trails = 1000, $\#$ of training = 200, $\#$ of testing = 150.}
        \label{Table4}
        \end{center}
\end{table}    

%
%
%

\begin{table}    
        \begin{center}
	\scalebox{0.68}{
        \begin{tabular}{||c||>{\columncolor[gray]{0.9}}c>{\columncolor[gray]{0.9}}ccccccccc||}\hline 
 \diagbox{\# of groups~~}{variables} & 1 & 2 & 3 & 4 & 5 & 6 & 7 & 8 & 9 & 10 \\ \hline \hline 
 \multirow{2}{*}{2}  &  -2.5575 &  -0.9233 &   0.3909 &   0.3847 &   0.4351 &   0.3806 &   0.4577 &   0.4907  &   0.5380 &   0.4030 \\ &  -2.5553 &  -0.9317 &   0.4765 &   0.4531 &   0.4306 &   0.4949 &   0.3278 &   0.4768  &   0.4120 &   0.4153\\\hline\hline
 \multirow{3}{*}{3}  &  -2.5007 &  -1.0458 &   0.3606 &   0.5410 &   0.4224 &   0.4528 &   0.4586 &   0.4539  &   0.4376 &   0.4196 \\ &  -2.5230 &  -0.9986 &   0.4032 &   0.3796 &   0.4316 &   0.4084 &   0.4668 &   0.4861  &   0.5294 &   0.4165\\ &  -2.5604 &  -0.9191 &   0.4683 &   0.4481 &   0.4297 &   0.5118 &   0.3120 &   0.4782  &   0.4171 &   0.4143\\\hline \hline
 \multirow{4}{*}{4}  &  -2.5571 &  -0.9231 &   0.4801 &   0.3717 &   0.4291 &   0.4099 &   0.4213 &   0.5028  &   0.4293 &   0.4360 \\ &  -2.5165 &  -1.0022 &   0.3638 &   0.4267 &   0.3430 &   0.4214 &   0.4407 &   0.5711  &   0.4662 &   0.4859\\ &  -2.5067 &  -1.0346 &   0.3827 &   0.5194 &   0.4629 &   0.4501 &   0.4416 &   0.4563  &   0.4149 &   0.4135\\ &  -2.5174 &  -1.0105 &   0.3908 &   0.3786 &   0.4308 &   0.4238 &   0.4664 &   0.5046  &   0.5192 &   0.4137\\\hline \hline \multirow{5}{*}{5}  &  -2.5434 &  -0.9502 &   0.4740 &   0.3641 &   0.4186 &   0.4337 &   0.4355 &   0.4973  &   0.4396 &   0.4308 \\ &  -2.5109 &  -1.0177 &   0.3855 &   0.4493 &   0.3519 &   0.4211 &   0.4537 &   0.5652  &   0.4460 &   0.4558\\ &  -2.5146 &  -1.0198 &   0.3832 &   0.5262 &   0.4492 &   0.4503 &   0.4460 &   0.4503  &   0.4157 &   0.4135\\ &  -2.5127 &  -1.0161 &   0.3838 &   0.3785 &   0.4410 &   0.4059 &   0.4788 &   0.4973  &   0.5296 &   0.4139\\ &  -2.5492 &  -0.9417 &   0.4646 &   0.4365 &   0.4331 &   0.5065 &   0.3319 &   0.4756  &   0.4151 &   0.4275\\\hline \hline
\end{tabular}

        }
        \caption{ Ex2: The mean of Z-scores of the mean predicted bandwidth for all variables; trails = 1000.}
        \label{Table5}
    \end{center}
\end{table}    
%

\subsection{Examples with different number of training examples}
In this example, we use different number of training examples to see the influence on multi-class classification. 
The data set are generated as the data in Example 2. The number of training examples in each group are 50, 150, 500, and 1000. For each multi-class setting, we perform the experiment on the first combination. 

In Table \ref{Table6}, on both methods, the accuracy rates increase when the number of training examples increase. However, even when the number of training examples is just 50, the proposed algorithm has about $10\%$ higher performance than LSPC. In LSPC, the accuracy increases about $10\%$ to $15\%$ when training samples increases from 50 to 1000.
In other words, through the decomposed algorithm, the proposed method has a more stable results and can achieve better classification accuracy easily even when training samples are relatively small. The Z-scores of the predicted bandwidth for all variables are listed on Table \ref{Table8}. The values of the first two relevant variables are all negative. 
\begin{table}
    
        \begin{center}
	\scalebox{0.65}{

        \begin{tabular}{||c||cccc||cccc||}\hline 
 & \multicolumn{8}{c||}{Methods}  \\  
 & \multicolumn{4}{c}{new} & \multicolumn{4}{c||}{LSPC} \\ \hline \hline
\diagbox{\# of training~}{\# of groups~~} & 2 & 3 & 4 & 5 & 2 & 3 & 4 & 5 \\ \hline \hline
 \multirow{2}{*}{50}  &   0.7504 &   0.7424 &   0.7168 &   0.7392 &   0.6603 &   0.5987 &   0.5917 &   0.6215 \\ & (  0.0373)& (  0.0287)& (  0.0279)& (  0.0226)& (  0.0421)& (  0.0354)& (  0.0323)& (  0.0304)\\\hline 
 \multirow{2}{*}{150}  &   0.7706 &   0.7620 &   0.7486 &   0.7728 &   0.7396 &   0.7143 &   0.6899 &   0.7079 \\ & (  0.0316)& (  0.0243)& (  0.0210)& (  0.0217)& (  0.0325)& (  0.0245)& (  0.0203)& (  0.0184)\\\hline 
 \multirow{2}{*}{500}  &   0.7862 &   0.7811 &   0.7537 &   0.7833 &   0.7927 &   0.7532 &   0.7210 &   0.7342 \\ & (  0.0289)& (  0.0225)& (  0.0209)& (  0.0185)& (  0.0286)& (  0.0230)& (  0.0186)& (  0.0164)\\\hline 
 \multirow{2}{*}{1000}  &   0.7877 &   0.7779 &   0.7584 &   0.7853 &   0.8052 &   0.7527 &   0.7299 &   0.7485 \\ & (  0.0270)& (  0.0237)& (  0.0222)& (  0.0176)& (  0.0252)& (  0.0228)& (  0.0188)& (  0.0193)\\\hline 
\end{tabular}

        }
        \caption{ Ex3: Accuracy results of classification on different number of training examples; trails = 1000.}
        \label{Table6}
    \end{center}
\end{table}

%

\begin{table}
        \begin{center}
	\scalebox{0.65}{
        \begin{tabular}{||c||cccc||cccc||}\hline 
 & \multicolumn{8}{c||}{Methods}  \\  
 & \multicolumn{4}{c}{new} & \multicolumn{4}{c||}{LSPC} \\ \hline \hline
\diagbox{\# of training~}{\# of groups~~} & 2 & 3 & 4 & 5 & 2 & 3 & 4 & 5 \\ \hline \hline
 \multirow{2}{*}{50}  &   1.0809 &   2.5371 &   4.7232 &   7.5061 &   0.0536 &   0.0905 &   0.1334 &   0.1904 \\ & (  0.0462)& (  0.1132)& (  0.4426)& (  0.7774)& (  0.0069)& (  0.0138)& (  0.0277)& (  0.0529)\\\hline 
 \multirow{2}{*}{150}  &   1.3949 &   3.0299 &   5.5781 &   9.4078 &   0.2759 &   0.4617 &   0.7104 &   1.1005 \\ & (  0.0581)& (  0.0827)& (  0.2526)& (  0.9324)& (  0.0252)& (  0.0271)& (  0.0543)& (  0.1422)\\\hline 
 \multirow{2}{*}{500}  &   2.0706 &   4.5040 &   7.9757 &  13.8507 &   3.8471 &   6.9419 &  11.1803 &  17.4199 \\ & (  0.0302)& (  0.0529)& (  0.0940)& (  1.5575)& (  0.1800)& (  0.1914)& (  0.2178)& (  1.7608)\\\hline 
 \multirow{2}{*}{1000}  &   3.1061 &   6.7781 &  11.9751 &  18.8411 &  22.7555 &  39.8806 &  59.3064 &  87.6326 \\ & (  0.0169)& (  0.0236)& (  0.0358)& (  0.6141)& (  0.9013)& (  0.9520)& (  1.0279)& (  3.0381)\\\hline 
\end{tabular}

        }
        \caption{ Ex3: Time duration of classification on different number of training examples; trails = 1000.}
        \label{Table7}
    \end{center}
\end{table}    

\begin{table}
        \begin{center}
	\scalebox{0.65}{
        \begin{tabular}{||c||>{\columncolor[gray]{0.9}}c>{\columncolor[gray]{0.9}}ccccccccc||}\hline 
 \diagbox{\# of groups~~}{variables} & 1 & 2 & 3 & 4 & 5 & 6 & 7 & 8 & 9 & 10 \\ \hline \hline 
 \multirow{2}{*}{2}  &  -2.4735 &  -1.0303 &   0.4167 &   0.3886 &   0.4392 &   0.4435 &   0.4175 &   0.4126  &   0.5068 &   0.4788 \\ &  -2.4797 &  -1.0014 &   0.4337 &   0.4401 &   0.5089 &   0.4098 &   0.3679 &   0.3848  &   0.4546 &   0.4814\\\hline\hline
 \multirow{3}{*}{3}  &  -2.4710 &  -1.0087 &   0.4111 &   0.4043 &   0.4058 &   0.4414 &   0.4526 &   0.4024  &   0.4837 &   0.4785 \\ &  -2.4900 &  -0.9813 &   0.4289 &   0.4389 &   0.4920 &   0.4379 &   0.3411 &   0.3914  &   0.4740 &   0.4673\\ &  -2.5256 &  -0.9002 &   0.4057 &   0.4196 &   0.3936 &   0.4843 &   0.4025 &   0.4680  &   0.4846 &   0.3675\\\hline \hline
 \multirow{4}{*}{4}  &  -2.4595 &  -1.0292 &   0.4561 &   0.4062 &   0.4164 &   0.4492 &   0.4079 &   0.4270  &   0.4704 &   0.4555 \\ &  -2.4927 &  -0.9814 &   0.4359 &   0.4452 &   0.4879 &   0.4502 &   0.3707 &   0.3751  &   0.4729 &   0.4361\\ &  -2.4827 &  -0.9911 &   0.4436 &   0.4438 &   0.4179 &   0.5118 &   0.4158 &   0.4170  &   0.4383 &   0.3857\\ &  -2.4456 &  -1.0667 &   0.4406 &   0.5397 &   0.4126 &   0.4330 &   0.3372 &   0.4458  &   0.4434 &   0.4600\\\hline \hline \multirow{5}{*}{5}  &  -2.4514 &  -1.0243 &   0.3720 &   0.4073 &   0.4435 &   0.4213 &   0.4486 &   0.4050  &   0.5133 &   0.4646 \\ &  -2.4686 &  -1.0032 &   0.4369 &   0.4308 &   0.4808 &   0.4067 &   0.3495 &   0.4428  &   0.4513 &   0.4730\\ &  -2.4996 &  -0.9476 &   0.4403 &   0.3970 &   0.4123 &   0.5161 &   0.4088 &   0.4414  &   0.4283 &   0.4030\\ &  -2.4495 &  -1.0469 &   0.4674 &   0.5668 &   0.3666 &   0.4518 &   0.3489 &   0.4348  &   0.4346 &   0.4255\\ &  -2.5386 &  -0.8790 &   0.4842 &   0.3599 &   0.5038 &   0.4184 &   0.4397 &   0.4106  &   0.4020 &   0.3989\\\hline \hline
\end{tabular}

        }
        \caption{ Ex3: The mean of Z-scores of the mean predicted bandwidth for all variables; trails = 1000.}
        \label{Table8}
    \end{center}
\end{table}

%
%
%

\subsection{Anuran Species Classification}
This example uses the anuran calls dataset \cite{Dua:2017} for recognizing and making classification of anuran species. This dataset was created by segmenting 60 audio records belonging to 4 different families, 8 genus, and 10 species. Total 7195 syllables were identified from the 60 bioacoustic signals after segmenting. Then each syllable is represented by a set of features extracted by Mel-Frequency Spectral Coefficients (MFCCs), which perform a spectral analysis based on a triangular filter-bank logarithmically spaced in the frequency domain. Therefore, each instance in the data set is a feature set of MFCCS coefficients which belong to a special species. Here we focus on classification of the main 7 out of total 10 species: Leptodactylus fuscus, Adenomera andreae, Adenomera hylaedactyla, Hyla minuta, Hypsiboas cinerascens, Hypsiboas cordobae, and  Ameerega trivittata. Besides the original dataset, we extend the dataset by addding 5 noise attributes with mean 0 and variance 1 for being the unrelated variables. In each trail, we randomly select 100 and 50 examples from each species as the training data and the evaluation points respectively. 

The box-plots of mean predicted bandwidths with noise features for 100 trials are displayed in Figure \ref{fig:frogs2}. It is clear that the bandwidths of the added noise attributes(the last 5 attributes) remain large, while the other bandwidths shrink. The results is consistent with the condition that these 5 attributes are irrelevant. The classification performance with and without the noise attributes are both shown in Table \ref{tb:frogs1}. With about $3\%$ increase on accuracy and precision while the specificity are similar,  the main progress after removing the irrelevant variables is on the true positive rate, the ability of target identification.


\begin{figure}
\begin{center}
	\includegraphics[scale=0.65]{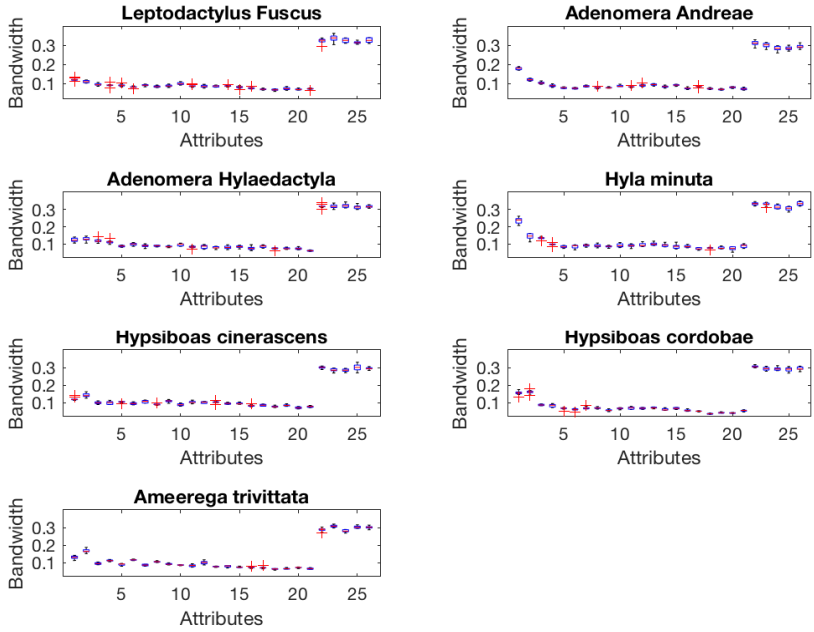}
	\caption{The box-plots of mean predicted bandwidths for anuran calls.}
	\label{fig:frogs2}
\end{center}
\end{figure}

\begin{table}
\begin{center}
\scalebox{0.9}{
\begin{tabular}{|l||r|r|r||r|r|r|}
\hline
     & \multicolumn{3} {c||}{Original dataset} & \multicolumn{3} {c|}{Extended dataset}\\
     & \multicolumn{3} {c||}{(without noise features)} & \multicolumn{3} {c|}{(with noise features)}\\ \hline

     & Accuracy & Precision & Specificity  & Accuracy & Precision & Specificity\\ \hline\hline
mean & 0.9155   & 0.9173      & 0.9859    & 0.8741   & 0.880731      & 0.9790\\
Std  & 0.0124   & 0.0019      & 0.0021   & 0.0142   & 0.0135      & 0.0024\\ \hline
\end{tabular}
}
\caption{Classification results of anuran species.}

\label{tb:frogs1}
\end{center}
\end{table}

\newpage
\subsection{Waveform dataset}
This example uses a generated waveform data \cite{Dua:2017}. 3 classes of waves are generated and each class is generated from a combination of 2 of 3 "base" waves. Each instance is generated by 21 related attributes with noise. After the generation of the waveform data, another 19 unrelated noise attributes with mean 0 and variance 1 are added to the dataset. The box-plots of bandwidth output of 3 classes for 100 trials are shown in Figure \ref{figure:wave}. The plots show that the bandwidths of the irrelevant attributes: 22:40, are larger than those of the relevant attributes: 1:21. The classification result 

\begin{figure}

	\includegraphics[scale=0.65]{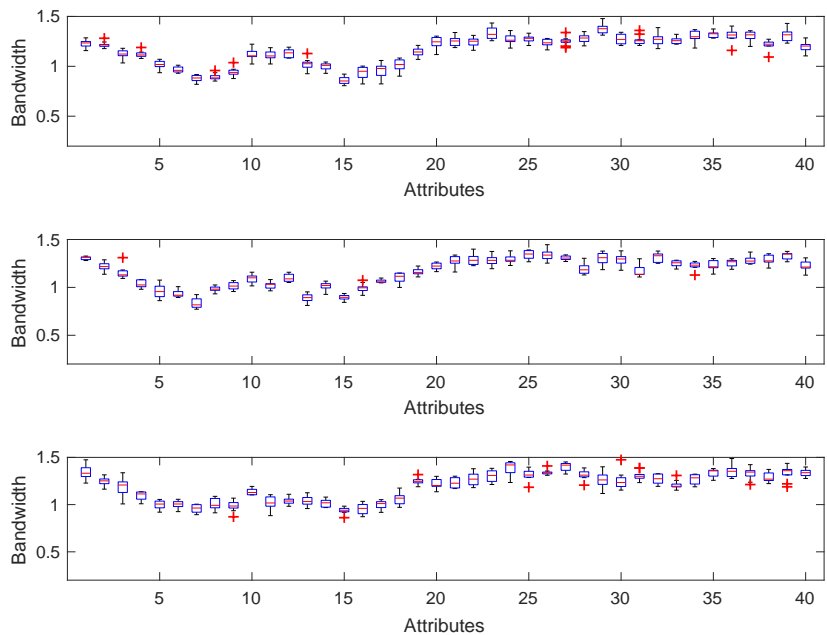}
		\caption{The box-plots of mean predicted bandwidths for the waveform dataset.}
	\label{figure:wave}

\end{figure}

\subsection{Handwritten digit dataset}
This example apply the algorithm on the handwritten digit data. In each digit dataset, 100 gray images were randomly selected for training and another 100 images are selected for testing. Each handwritten digit has $8\times 8 = 64$ pixels and we consider each pixel is a variable. Therefore, it is a 10-class classification through 64-dimensional density estimation problem. The classification results are shown in Table \ref{tb:digit1}. The performance is every great. Accuracy, specificity, and specificity are all great than $95\%$. The box-plots in Figure \ref{fig:digit1} and Figure \ref{fig:digit2} illustrate the mean of the selected bandwidths of the testing images for 100 trials. In each plot/digit, some bandwidths remain on the top with very small interquartile range(IQR) and some have very large interquartile range. Unlike the situation that the attributes with high bandwidths are the irrelevant variables, in fact, these attributes that have large IQR are the irrelevant variables in this example. Because instead of the uniform distribution, the background pixels of the image data have a density close to point mass. In this case,  the corresponding bandwidths could drop to a very small value sometimes. Therefore, in the box-plots of all 10 digits, we can see the bandwidths of attributes 1, 8,9,16,17,24,25,32,33,40, 41,48,49,56,57,64, which are the pixels on the top and bottom of the image, meet the situation. 
\begin{table}
\begin{center}
\begin{tabular}{|l|r|r|r|}
\hline
     & Accuracy & Precision & Specificity   \\ \hline\hline
mean & 0.9698   & 0.9966      & 0.9797  \\
Std  & 0.0047   & 0.0005      & 0.0030   \\ \hline
\end{tabular}
\caption{Classification results of handwritten digit dataset.}

\label{tb:digit1}
\end{center}
\end{table}

\begin{figure}

	\includegraphics[scale=0.6]{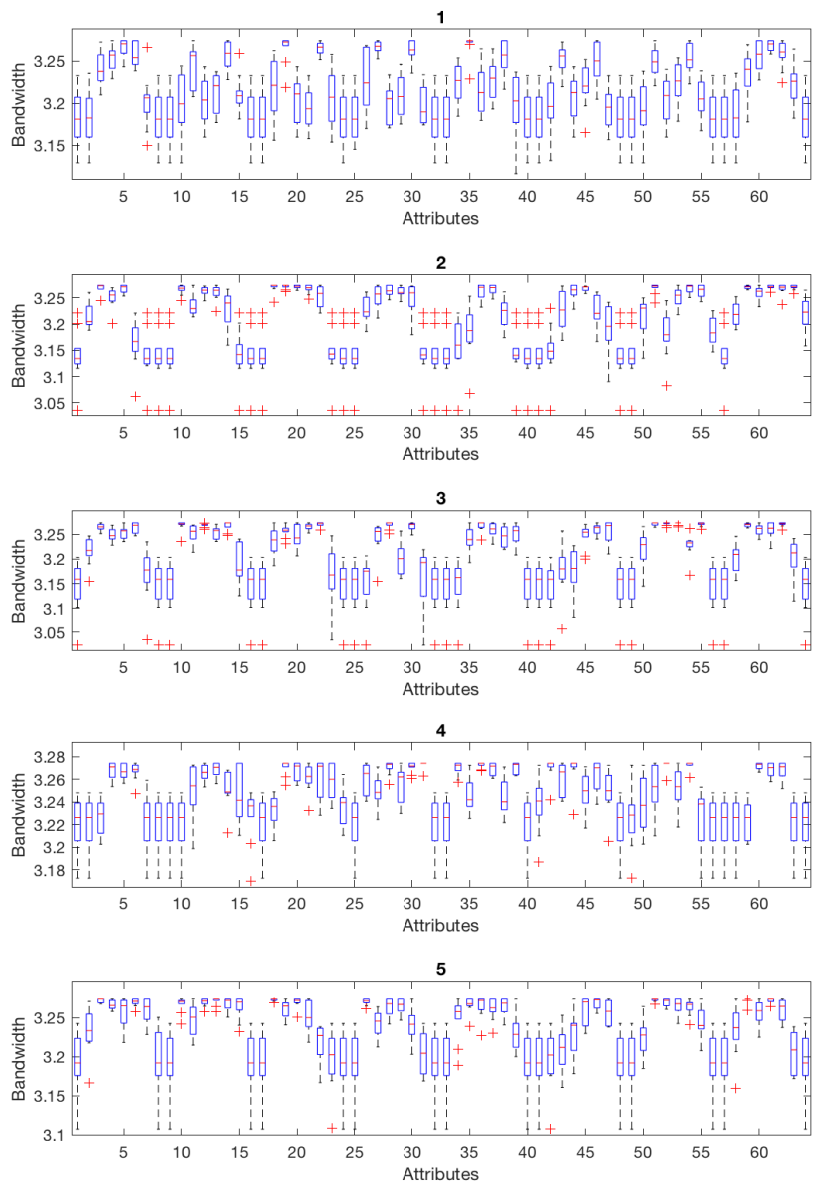}
	\caption{The box-plots of mean predicted bandwidths for handwritten digit 1, 2, 3, 4, 5.}
	\label{fig:digit1}

\end{figure}
\begin{figure}

	\includegraphics[scale=0.6]{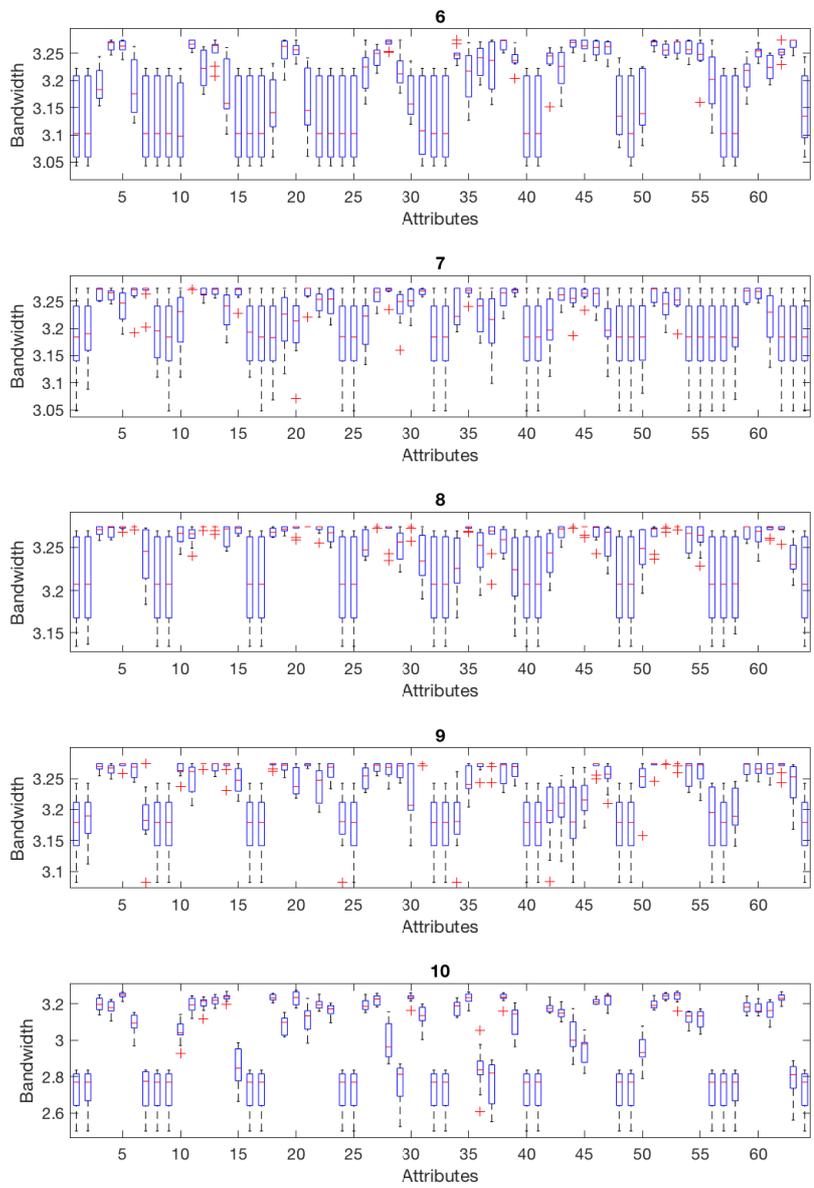}
	\caption{The box-plots of mean predicted bandwidths for handwritten digit 6,7,8,9,0.}
	\label{fig:digit2}

\end{figure}


\section{Asymptotic Properties}

\newtheorem{mydef}{Theorem}
\begin{mydef}
Let $\mathbf{X}\in \mathbb{R}^d$ be compact and $Y \in \mathcal{Y} =\{1, \dots, c\}$ the label variable. $p_{y'} = P(Y=y')$ and $f_{y'}$ denote the unknown population probability and the unknown probability density function for the $y'$th class, respectively, $y' \in \mathcal{Y}$. Based the sparsity assumption that $f_{y'}(\mathbf{x})$  could be factorized into two components, $f_{y'}(\mathbf{x})\propto g_{y'}(x_1, \dots, x_{r_{y'}})b_{y'}(\mathbf{x})$, where the second derivate $b_{y'}^{(jj)}(\mathbf{x}) =0$ for $j=1, \dots,d$.
Let $\hat f_{y'}(\mathbf{x};\hat H_{y'})$ be the estimated probability density function by Rodeo, where $\hat H_{y'}=diag(\hat h_1^{y'},\dots,\hat h_d^{y'})$ is the corresponding estimated bandwidth matrix.
Assume the sample size in class $y'$ and the total sample size are $n_{y'}$ and $n = \sum_{y'\in \mathcal{Y}}n_{y'}$. Use the the sample proportion $\hat p_{y'} = n_{y'}/n$ and Rodeo estimates $\hat f_{y'}(\mathbf{x};\hat H_{y'})$ to construct the classification rule: 
\begin{equation}
\hat y = \argmax_{y' \in \mathcal{Y}}\hat p_{y'}\hat f_{y'}(\mathbf{x};\hat H_{y'}),
\label{AppBayes}
\end{equation}
which is used to approximate the Bayes rule : 
\begin{equation}
y = \argmax_{y' \in \mathcal{Y}}p_{y'}f_{y'}(\mathbf{x}).
\label{Bayes}
\end{equation}
If $L^\star$ and $L_n$ denote the probability of error using the Bayes rule (\ref{Bayes}) and the approximation (\ref{AppBayes}) respectively, then
$L_n- L^*\rightarrow 0$ a.s. 
\label{theorem1}
\end{mydef}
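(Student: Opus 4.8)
The plan is to reduce the statement to the almost-sure $L^1$-consistency of the individual class density estimates, following the classical plug-in argument of \cite{ryzin1966}. Write $s_{y'}(\mathbf{x}) = p_{y'} f_{y'}(\mathbf{x})$ for the true joint scores and $\hat s_{y'}(\mathbf{x}) = \hat p_{y'}\hat f_{y'}(\mathbf{x};\hat H_{y'})$ for their estimates, so that the Bayes rule (\ref{Bayes}) and the plug-in rule (\ref{AppBayes}) are exactly the pointwise maximizers of $s_{y'}$ and $\hat s_{y'}$. Since the probability of correct classification of any rule $g$ equals $\int s_{g(\mathbf{x})}(\mathbf{x})\,d\mathbf{x}$, the excess error admits the representation
\begin{equation}
L_n - L^\star = \int \big[\, s_{y(\mathbf{x})}(\mathbf{x}) - s_{\hat y(\mathbf{x})}(\mathbf{x}) \,\big]\, d\mathbf{x} \geqslant 0 .
\end{equation}

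First I would establish the fundamental plug-in inequality. On the set where $\hat y(\mathbf{x}) = y(\mathbf{x})$ the integrand vanishes; elsewhere, inserting $\pm\hat s_{y(\mathbf{x})}$ and $\pm\hat s_{\hat y(\mathbf{x})}$ and using that $\hat y$ maximizes $\hat s_{y'}$, so that $\hat s_{y(\mathbf{x})} - \hat s_{\hat y(\mathbf{x})} \leqslant 0$, one obtains the pointwise bound $0 \leqslant s_{y(\mathbf{x})} - s_{\hat y(\mathbf{x})} \leqslant 2\sum_{y'\in\mathcal{Y}} |s_{y'}(\mathbf{x}) - \hat s_{y'}(\mathbf{x})|$. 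Integrating yields
\begin{equation}
0 \leqslant L_n - L^\star \leqslant 2\sum_{y'\in\mathcal{Y}} \int \big| \hat p_{y'}\hat f_{y'}(\mathbf{x};\hat H_{y'}) - p_{y'} f_{y'}(\mathbf{x}) \big|\, d\mathbf{x} .
\end{equation}
Because $\mathcal{Y}$ is finite, it then suffices to show that each summand tends to $0$ almost surely. I would split each joint-score error by the triangle inequality,
\begin{equation}
\int \big| \hat p_{y'}\hat f_{y'} - p_{y'} f_{y'} \big|\, d\mathbf{x} \leqslant |\hat p_{y'} - p_{y'}| \int \hat f_{y'}\, d\mathbf{x} + p_{y'} \int \big| \hat f_{y'} - f_{y'} \big|\, d\mathbf{x} ,
\end{equation}
and handle the prior term via the strong law of large numbers: $\hat p_{y'} = n_{y'}/n$ is an average of i.i.d.\ Bernoulli indicators, hence $\hat p_{y'} \to p_{y'}$ a.s., while $\int \hat f_{y'}\,d\mathbf{x}$ stays bounded. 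The remaining term is the strong $L^1$ error of the Rodeo estimator.

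The main work, and the main obstacle, is the almost-sure $L^1$-consistency $\int |\hat f_{y'}(\mathbf{x};\hat H_{y'}) - f_{y'}(\mathbf{x})|\,d\mathbf{x} \to 0$ for each class. Here I would invoke the consistency analysis of \cite{pmlr-v2-liu07a} under the stated sparsity hypothesis $f_{y'} \propto g_{y'}(x_1,\dots,x_{r_{y'}})\,b_{y'}(\mathbf{x})$ with $b_{y'}^{(jj)} \equiv 0$: the greedy rule shrinks only the $r_{y'}$ relevant coordinates while leaving the irrelevant ones near the large initial bandwidth $h_0$, so that the estimator behaves like an $r_{y'}$-dimensional kernel estimator and attains pointwise consistency at rates governed by $r_{y'}$ rather than $d$. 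The delicate points are that the bandwidths $\hat H_{y'}$ are data-dependent and selected locally, and that the Rodeo guarantees are naturally pointwise. To upgrade pointwise almost-sure convergence to $L^1$ almost-sure convergence I would exploit the compactness of the support together with Scheff\'e's lemma (both $\hat f_{y'}$ and $f_{y'}$ being densities), which converts pointwise convergence of densities into $L^1$ convergence, combined with a Borel--Cantelli argument tied to the threshold $\lambda_j = s_j\sqrt{2\log(n_{y'} c_n)}$ that makes every per-coordinate stopping decision correct eventually, almost surely. Assembling the a.s.\ convergence of the prior term and of the $L^1$ density error over the finitely many classes then forces the right-hand side of the fundamental inequality to zero, giving $L_n - L^\star \to 0$ a.s.
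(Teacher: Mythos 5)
Your reduction is sound and coincides with the first half of the paper's argument: the paper also starts from the plug-in bound $0 \leqslant L_n - L^\star \leqslant \sum_{y'}\int|p_{y'}f_{y'}-\hat p_{y'}\hat f_{y'}|$ (quoting it directly from \cite{ryzin1966}, without your factor of $2$, which your derivation makes superfluous anyway), performs the same triangle-inequality split into a prior term and a density term, and disposes of the prior term by $\hat p_{y'}\to p_{y'}$ (the paper uses the law of the iterated logarithm to get the rate $\sqrt{\log\log n/n}$ rather than the bare strong law, because it wants a rate and not just consistency).

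The genuine gap is in your final step. First, Scheff\'e's lemma requires that the estimators be probability densities, or at least that $\int\hat f_{y'}\to\int f_{y'}$; but the Rodeo estimator here uses \emph{locally} selected, data-dependent bandwidths $\hat H_{y'}$ that vary with the evaluation point $\mathbf{x}$ (this is exactly how Algorithms~1--3 deploy it), so the resulting function $\mathbf{x}\mapsto \hat f_{y'}(\mathbf{x};\hat H_{y'}(\mathbf{x}))$ does not integrate to one, and the conversion from a.e.\ pointwise to $L^1$ convergence collapses without a substitute uniform-integrability argument. Second, the guarantees in \cite{pmlr-v2-liu07a} are stated at a fixed point $\mathbf{x}$ and hold with probability tending to one; upgrading them to almost-sure convergence simultaneously for almost every $\mathbf{x}$ needs either summable failure probabilities uniformly over $\mathbf{x}$ or a Fubini argument that presupposes a.s.\ convergence at each fixed point, and your Borel--Cantelli gesture at the threshold $\lambda_j=s_j\sqrt{2\log(n_{y'}c_n)}$ does not supply this. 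The paper sidesteps both problems by never passing through pointwise convergence: it bounds the \emph{expected} $L^1$ error directly via the bias--variance expansions of \cite{devroye1985} and \cite{HALL198859}, substitutes the Rodeo bandwidth rates ($\hat h_j^{y'}\asymp n_{y'}^{-1/(4+r_{y'})}$ on relevant coordinates, $\hat h_j^{y'}=h^{(0)}$ on irrelevant ones), and obtains the explicit rate $L_n-L^\star=\mathcal{O}(n^{-2/(4+r)})$, a strictly stronger quantitative conclusion than your qualitative one --- though in fairness the paper's own passage from the expectation bound to the almost-sure statement (the step invoking \cite{Kundu1997}) is itself stated only conditionally, so the a.s.\ upgrade is a shared weak point of both arguments.
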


\begin{proof}
Here we show the consistent properties of the results of the proposed multi-class classification problem.
The proposed algorithm uses the estimated conditional probability to approximate the Bayes rule:
\begin{equation}
 y = \argmax _{y' \in \mathcal{Y}} P(Y = y'|\mathbf{X} = \mathbf{x}).
\end{equation} 
Because $P(Y = y'|\mathbf{X} = \mathbf{x}) = \frac{1}{Z}P(Y = y')P(\mathbf{X} = \mathbf{x}|Y=y')$, where $Z = p(\mathbf{X} = \mathbf{x}) = \sum_{y'=1}^cP(\mathbf{X} = \mathbf{x}|Y=y')$ is the scaling factor, the other way to approximate the Bayes classifier is based on the probability models of each group: 
\begin{align*}
y &= \argmax_{y' \in \mathcal{Y}} P(Y = y') P(\mathbf{X} = \mathbf{x}|Y = y')\\
& =\argmax_{y' \in \mathcal{Y}} p_{y'}f_{y'}(\mathbf{x}),
\end{align*} 
where $f_{y'}$ and $p_{y'}$ denote the probability density function and population probability for the $y'$th class respectively. Van Ryzin (1966) has shown the difference of probability of error  using the Bayes rule and approximation are bounded by

\begin{equation}
0 \leqslant L_n- L^* \leqslant \sum_{y' = 1}^c \int \Big|p_{y'}f_{y'}(\mathbf{x}) - \hat p_{y'}\hat f_{y'}(\mathbf{x})\Big| dx\\
\label{eq:errorrate1}
\end{equation}
where $L^\star$ and $L_n$ denote the probability of error using Bayes rule and approximation respectively. In Eq.(\ref{simplified}), the estimated conditional probability is based on the function of class density estimates, so we prove the consistency of the proposed Bayes approximation from Eq.(\ref{eq:errorrate1}) by Van Ryzin (1966).

From Eq.(\ref{eq:errorrate1}), the upper bound of the difference of probabilities of error can be rewritten as 
\begin{align}
L_n - L^\star \leqslant & \sum_{y' = 1}^c \int \Big| p_{y'} f_{y'}(\mathbf{x}) - \hat p_{y'}\hat f_{y'}(\mathbf{x})\Big| dx\nonumber\\
= & \sum_{y' = 1}^c \left[ \int\hat p_{y'}\Big| f_{y'}(\mathbf{x}) - \hat f_{y'}(\mathbf{x})\Big|dx + \int f_{y'}(\mathbf{x})\Big|p_{y'}  - \hat p_{y'} \Big|dx \right].
\label{eq:errorrate2}
\end{align}
It is natural to use the sample proportions as the estimates of unknown population probabilities $\hat p_{y'} = n_{y'}/n$, $y' = 1, \dots, c$, where $n_{y'}$ is the number of training samples in class $y'$. In this case, the convergence rate of $|p_{y'}  - \hat p_{y'}|$ is $\sqrt{\log(\log(n))/n}$. 
Because $f_{y'}(\mathbf{x})$ is a density function, we have
\begin{equation}
\int f_{y'}(\mathbf{x})\Big|p_{y'}  - \hat p_{y'} \Big|dx = \mathcal{O}(\sqrt{\log(\log(n))/n}).
\label{eq:ConvergenceRate1}
\end{equation}

Consider the kernel estimates of the density $f_{y'}$ are
\begin{equation}
\hat f_{y'}(\mathbf{x}|H_{y'}) = \frac{1}{n_{y'}}\sum_{i=1}^{n_{y'}}\frac{1}{\det(H_{y'})}K(inv(H_{y'})(\mathbf{x}-\mathbf{x}_i)),
\end{equation}
where 
$K$ is a $d$-dimensional bounded symmetric kernel satisfying
\begin{equation}
\int K(u)du = 1_d, \text{ and } \int uK(u)du = 0_d, 
\end{equation}
and $H_{y'} = diag(h^{y'}_1, \dots, h^{y'}_d)$ is the bandwidth matrix. 
Let 
\begin{equation}
\alpha = \int_{\mathbb{R}^d} uu^TK(u)du \text{ and } \beta = \Big(\int_{\mathbb{R}^d} K^2(u)du \Big)^{1/2}.
\label{eq:alphabeta}
\end{equation}
If $f_{y'}$ is bounded and if all the second derivates of $f_{y'}$ are bounded and continuous, then by Devroye and Gyorfi (1985) \cite{devroye1985} and Hall and Wand (1988) \cite{HALL198859}
\begin{align}
&E\int |\hat f_{y'}(\mathbf{x}) - f_{y'}(\mathbf{x}) | dx\nonumber \\
\leqslant & \frac{\alpha}{2}\int tr(H_{y'}^{T}\mathcal{H}_{f_{y'}(\mathbf{x})}(\mathbf{x})H_{y'}) + \beta(n_{y'}\det (H_{y'}))^{-1/2} \int \sqrt{f_{y'}}\nonumber \\
&+ o(tr(H_{y'}^{T} H_{y'}) + (n_{y'}\det (H_{y'}))^{-1/2}  ),
\end{align}
where $\mathcal{H}_{f_{y'}(\mathbf{x})}(\mathbf{x})$ denotes the Hessian matrix of $f_{y'}(\mathbf{x})$.

Given the sparsity assumption that $f_{y'}(\mathbf{x})$  could be factorized into two components, $f_{y'}(\mathbf{x})\propto g_{y'}(x_1, \dots, x_{r_{y'}})b_{y'}(\mathbf{x})$, where the second derivate $b_{y'}^{(jj)}(\mathbf{x}) =0$ for $j=1, \dots,d$, the Rodeo algorithm outputs the selected bandwidths $\hat H_{y'} = diag(\hat h^{y'}_1, \dots, \hat h^{y'}_d)$ that satisfies
\begin{equation*}
\text{when }\lim _{n\rightarrow\infty} \hat h^{y'}_j =0; \lim _{n\rightarrow\infty} n\hat h^{y'}_j =\infty; j = 1,\dots, d, 
\end{equation*}
\begin{equation}
P\Big(\hat h^{y'}_j = h^{(0)} \text{ for all }j > r_{y'} \Big)\rightarrow 1,
\end{equation}
and
\begin{equation}
P\Big(h^{(0)}(nb_n)^{-1/(4+r_{y'})} \leqslant \hat h^{y'}_j \leqslant h^{(0)}(na_n)^{-1/(4+r_{y'})} \text{ for all }j \leqslant r_{y'} \Big)\rightarrow 1,
\end{equation}
where $\lim\inf_n |\frac{a_n}{logn}|>0$, $b_n = \mathcal{O}(\log n)$, and $h^{(0)} = c_0/(\log \log n)$ for some constant $c_0$.
Because the convergence rate for $\hat h_j^{y'}$ is $n_{y'}^{-1/(4+r_{y'})}$, $j \leqslant r_{y'}$, we can set $\hat h_j^{y'} = k_j^{y'}n_{y'}^{-1/(4+r_{y'})}$ for some constant k. In this case, 
\begin{align}
&E\int |\hat f_{y'}(\mathbf{x}|\hat H_{y'}) - f_{y'}(\mathbf{x}) | dx\nonumber \\
\leqslant & \frac{\alpha}{2}\int tr(\hat H_{y'}^{T}\mathcal{H}_{\mathcal{R}_{y'}}(\mathbf{x})\hat H_{y'}) + \beta(n_{y'}\det (\hat H_{y'}))^{-1/2} \int \sqrt{f_{y'}}\nonumber \\
&+ o(tr(\hat H_{y'}^{T} \hat H_{y'}) + (n_{y'}\det (\hat H_{y'}))^{-1/2}  )\nonumber\\
=& \Bigg(\frac{\alpha}{2} \int \Big|\sum_{j=1}^{r_{y'}}k_j^{y'2}f_{y'}^{(jj)}(\mathbf{x})\Big|  + \beta \Big(\prod_{j=1}^{r_{y'}}k_j^{y'}\Big)^{-1/2}\int \sqrt{f_{y'}}\Bigg)n_{y'}^{-\frac{2}{4+r_{y'}}} + o\Big(n_{y'}^{-\frac{2}{4+r_{y'}}}\Big),
\label{eq:ConvergenceRate2}
\end{align}
where $\mathcal{H}_{\mathcal{R}_{y'}}(\mathbf{x})$ is the Hessian matrix of the relevant dimension $j\leqslant r_{y'}$.
If Eq.(\ref{eq:ConvergenceRate2}) holds almost surely, by Kundu and Martinsek (1997) \cite{Kundu1997}, it means
\begin{align}
&\limsup_{n\rightarrow\infty}\Bigg(\int |\hat f_{y'}(\mathbf{x}|\hat H_{y'}) - f_{y'}(\mathbf{x})\Bigg)n_{y'}^{\frac{2}{4+r_{y'}}}\nonumber\\
\leqslant &\frac{\alpha}{2} \int \Big|\sum_{j}^{r_{y'}}k_j^{y'2}f_{y'}^{(jj)}(\mathbf{x})\Big| +  \beta \Big(\prod_{j=1}^{r_{y'}}k_j^{y'}\Big)^{1/2}\int \sqrt{f_{y'}}\quad\text{   a.s.}
\label{eq:ConvergenceRate3}
\end{align}

Let $r$ be the max value of the numbers of relevant variables among $c$ groups: $r = \max_{y'=1}^c r_{y'}$. From Eq.(\ref{eq:errorrate2}), Eq.(\ref{eq:ConvergenceRate1}), and Eq.(\ref{eq:ConvergenceRate3})
\begin{align}
L_n - L^\star &\leqslant \sum_{y' = 1}^c \Bigg( \hat p_{y'}\int \Big| f_{y'}(\mathbf{x}) - \hat f_{y'}(\mathbf{x})\Big|dx + \int f_{y'}(\mathbf{x})\Big|p_{y'}  - \hat p_{y'} \Big|dx \Bigg)\nonumber\\
&=\sum_{y' = 1}^c \Bigg(\mathcal{O}\Big(n_{y'} ^{-2/(4+r_{y'})}\Big)+ \mathcal{O}\Big(\sqrt{\log(\log(n))/n}\Big)\Bigg)\nonumber\\
&= \mathcal{O}\Big(n^{-2/(4+r)}\Big),
\label{eq:errorrate3}
\end{align}  
because $\sqrt{\log(\log(n))/n}$ is faster than $n^{-2/(4+r)}$.
Therefore, the consistency of the classification procedures using the proposed algorithm is proven:
\begin{equation}
L_n - L^\star \xrightarrow[n\rightarrow \infty] {}0 \text{ a.s.}
\end{equation}
\end{proof}

\newtheorem{mycol}{Collary}
\begin{mycol}
From the consistency property in Theorem \ref{theorem1}, in order to find the desired samples sizeses for approaching the convergence level, progressively increase sample sizes such that the function of size $n_{y'}^{(2+r_{y'})/(4+r_{y'})}$ proportion to $\frac{\alpha}{2} \int \Big|\sum_{j=1}^{r_{y'}}k_j^{y'2}f_{y'}^{(jj)}\Big|  + \beta \Big(\prod_{j=1}^{r_{y'}}k_j^{y'}\Big)^{-1/2}\int \sqrt{f_{y'}}$, $j = 1, \dots, d$, $y' = 1, \dots, c$, can help speed up the procedure, where $\alpha$ and $\beta$ are defined on Eq.(\ref{eq:alphabeta}), $f_{y'}$ is the density function of the $y'$th group, $k_j^{y'}$ is some constant that is proportion to the ratio of the kernel bandwidth $h_j^{y'}$ over $n_{y'}^{-1/(4+r_{y'})}$, and $r^{y'}$ is the number of related variables in $y'$th group.
\end{mycol}
From Eq.(\ref{eq:errorrate3}), if we want to bound the difference of error rates by $\epsilon: L_n - L^\star \leqslant \epsilon$, asymptotically, it would be enough to bound
\begin{equation}
\sum_{y' = 1}^c \hat p_{y'}\int \Big| f_{y'}(\mathbf{x}) - \hat f_{y'}(\mathbf{x})\Big|dx.
\end{equation}
Therefore, according to Eq.(\ref{eq:ConvergenceRate3}), the appropriate sample sizes for the training samples in each groups should follow 
\begin{equation}
\sum_{y' = 1}^c \frac{n_{y'}}{n}\Bigg(\frac{\alpha}{2} \int \Big|\sum_{j=1}^{r_{y'}}k_j^{y'2}f_{y'}^{(jj)}\Big|  + \beta \Big(\prod_{j=1}^{r_{y'}}k_j^{y'}\Big)^{-1/2}\int \sqrt{f_{y'}}\Bigg)n_{y'}^{-\frac{2}{4+r_{y'}}}\leqslant \epsilon,
\label{eq:samplesize1}
\end{equation}
which can be rewritten as
\begin{equation}
\sum_{y' = 1}^c n_{y'}^{\frac{2+r_{y'}}{4+r_{y'}}}\Bigg(\frac{\alpha}{2} \int \Big|\sum_{j=1}^{r_{y'}}k_j^{y'2}f_{y'}^{(jj)}\Big|  + \beta \Big(\prod_{j=1}^{r_{y'}}k_j^{y'}\Big)^{-1/2}\int \sqrt{f_{y'}}\Bigg)\leqslant n\epsilon.
\label{eq:samplesize2}
\end{equation}
Let $\vec{A} = (A_1, \dots, A_c)'$ and $\vec{B} = (B_1, \dots, B_c)'$  denote vectors with components
\begin{align}
A_{y'} = & n_{y'}^{(2+r_{y'})/(4+r_{y'})}
\label{eq:A1}\\
B_{y'} = & \frac{\alpha}{2} \int \Big|\sum_{j=1}^{r_{y'}}k_j^{y'2}f_{y'}^{(jj)}\Big|  + \beta \Big(\prod_{j=1}^{r_{y'}}k_j^{y'}\Big)^{-1/2}\int \sqrt{f_{y'}}, \;\; y' = 1 \dots, c
\label{eq:B1}
\end{align}
respectively. Then Eq.(\ref{eq:samplesize2}) is 
\begin{equation}
\sum_{y' = 1}^c A_{y'} B_{y'}  = \vec{A}\cdot\vec{B}\leqslant n\epsilon.
\end{equation}
Because the max value of the inner product of two vectors happens when they are parallel to each other, bound the difference of error rates with the sample sizes according the parallel setting $A \propto B$ can help decide the sampling procedures to meet the asymptotic rate.

However, the true density functions $f_{y'}, y' = 1,\dots, c$ are unknown. In this situation, the estimated density function by Rodeo $\hat f_{y'}$ are used and the integral is replaced by the Monte Carlo integral using importance sampling on samples which are predicted as the given group. The components in vectors $\vec A$ and $\vec B$ are replaced as
\begin{align}
\hat A_{y'} = & n_{y'}^{(2+\hat r_{y'})/(4+\hat r_{y'})}
\label{eq:A2}\\
\hat B_{y'} = & \frac{\alpha}{2} \sum_{\{\mathbf{x}^{\hat y'}\}} \frac{\Big|\sum_{j=1}^{\hat r_{y'}}k_j^{y'2}\hat f_{y'}^{(jj)}(\mathbf{x})\Big|}{\hat f_{y'}(\mathbf{x})}  + \beta \Big(\prod_{j=1}^{\hat r_{y'}}k_j^{y'}\Big)^{-1/2}\sum_{\{\mathbf{x}^{\hat y'}\}} \frac{\sqrt{\hat f_{y'}(\mathbf{x})}}{\hat f_{y'}(\mathbf{x})}.
\label{eq:B2}
\end{align}

In this case, a 2-step algorithm for finding the sample sizes to approach the desired convergence level is proposed. It includes the estimation and resampling steps. In the estimation step (E-step), given the current training data, with applying the Rodeo density estimation for all groups, we have estimated bandwidths for each group and then get the estimated label for testing samples. In the resampling step (R-step), based on the density estimation and estimated label, we can make decision to include more training samples to meet the condition $A \propto B$ for fastening the procedure. Then using Eq.(\ref{eq:samplesize2}) as the stopping rule, when the sample sizes $n_1, \dots, n_c$ meets the criteria, we think take the final density estimation for classification is good enough to approach the error risk of Bayes rule. The algorithm is given below.\\

\begin{algorithm}[H]
\SetAlgoLined
  \caption{Sample Sizes Estimation}
  \label{alg4}
    \KwData{
    $\{\mathbf{x}^{y'}\}$, 
      ${y'}= 1, \dots, c$:  data set in $y'$th  group\\
    }
    \KwIn{$n_0$: initial training sample size for each group,\\
    \hspace{41pt}$n_{test}$ : number of testing samples in each group,\\
    \hspace{41pt}$\epsilon^\star$: upper bound}
    \KwOut {$n_1, \dots, n_c$: sample sizes needed for each group }
    \vspace{0.25in}
    \textbf{Initialization}\\
    $n_0$ training samples from each group: $\{\mathbf{x}^{y'}_1, \dots \mathbf{x}^{y'}_{n_0}\}$, $y' = 1, \dots, c,$\\
    $\{\mathbf{x}_i, i = 1, \dots, m=c*n_{test}\}$: choose $n_{test}$ samples form each group to form the testing set,\\
    $n_y = n_0, N = \sum n_{y'}, \epsilon = 1$\\

    \vspace{0.25in}
    \While{$\epsilon > \epsilon^\star$ }{
	\textbf{E-step: Density Estimation}\\
	Apply classification with Rodeo algorithm (Algorithm \ref{C+R}) on testing samples, we have\\
	\begin{enumerate}
	\item Estimated label: $\hat y_i$ for each testing sample
	\item Selected variables: $R^{y'}$ for each group
	\item Size of related variables: $\hat r^{y'} = size(R^{y'})$ for each group
	\end{enumerate}
%
%
%
%
	\textbf{R-step: Resampling}\\

	Calculate $\hat A_{y'}$ and $\hat B_{y'}$ by Eq.(\ref{eq:A2}) and Eq.(\ref{eq:B2}), $y' = 1, \dots, c$\\

	Calculate $\epsilon = \hat{\vec A} * \hat{\vec B}/N$\\
	Increase total sample size: $N = N + N_{add}$\\
	Rearrange $(n_1, \dots, n_c) \ni\hat{\vec A}\propto \hat{\vec B}$ and $\sum_{y'=1}^c n_{y'} = N$
	
	}
\end{algorithm}

\bibliographystyle{spmpsci}      
\bibliography{report}   

\end{document}